\documentclass[11pt]{article}

\usepackage[preprint]{acl}

\usepackage{times}
\usepackage{latexsym}
\usepackage[T1]{fontenc}
\usepackage[utf8]{inputenc}
\usepackage{microtype}
\usepackage{graphicx}
\usepackage{float}
\usepackage{placeins}
\usepackage{booktabs}
\usepackage{multirow}
\usepackage[table]{xcolor}
\usepackage{amsfonts}
\usepackage{nicefrac}
\usepackage{amsmath}
\usepackage{amssymb}
\usepackage{amsthm}
\newcommand{\vpara}[1]{\vspace{0.05in}\noindent \textbf{#1 }}
\newtheorem{theorem}{Theorem}
\newtheorem{corollary}{Corollary}

\title{Dynamic Rollout Editing for Reducing Overthinking in RL-Trained Reasoning Models}

\author{
  \textbf{Zihao Wei}\textsuperscript{1,2},
  \textbf{Wenjie Shi},
  \textbf{Liang Pang}\textsuperscript{1}\Thanks{Corresponding author.},
  \textbf{Jingcheng Deng}\textsuperscript{1,2},
  \textbf{Shicheng Xu}\textsuperscript{1,2},
  \textbf{Shasha Guo}\textsuperscript{1},\\
  \textbf{Zenghao Duan}\textsuperscript{1,2},
  \textbf{Jiahao Liu},
  \textbf{Jingang Wang},
  \textbf{Huawei Shen}\textsuperscript{1},
  \textbf{Xueqi Cheng}\textsuperscript{1} \\
  \textsuperscript{1}Institute of Computing Technology, Chinese Academy of Sciences, Beijing, China \\
  \textsuperscript{2}University of Chinese Academy of Sciences, Beijing, China \\
  \texttt{\{weizihao22z, pangliang\}@ict.ac.cn} \\
  \texttt{shiwj09@163.com}
}

\begin{document}
\maketitle

\begin{abstract}

Long-form chain-of-thought reasoning can improve LLM performance on complex tasks, but models often continue generating unnecessary reasoning after a correct answer has emerged. We refer to this behavior as overthinking. We study this phenomenon from the perspective of GRPO-style reinforcement learning (RL) post-training, framing it as a training-time credit-assignment problem rather than merely a decoding-time stopping problem.
In rollouts sampled at the onset of GRPO training, we observe that successful trajectories can exhibit a slightly higher degree of overthinking than unsuccessful trajectories for the same prompts. This early imbalance provides a starting point for an undesirable feedback loop: because GRPO assigns sequence-level credit, it cannot distinguish the solution-reaching prefix from the unnecessary continuation that lengthens a successful trajectory. Both receive positive update signal, allowing the initial imbalance to grow into more severe overthinking during training.
To address this issue, we introduce Dynamic Rollout Editing (DRE), a training-time intervention for successful trajectories that continue thinking after answer emergence. DRE preserves the accepted verified prefix, edits the remaining thinking, and prefers the edited trajectory within the same RL group, weakening the preference signal for unnecessary thinking without penalizing the reasoning needed to reach the answer. Experiments across diverse tasks show  the effectiveness of DRE.

\end{abstract}

\section{Introduction}

\begin{figure}[t]
\centering
\includegraphics[width=0.95\linewidth]{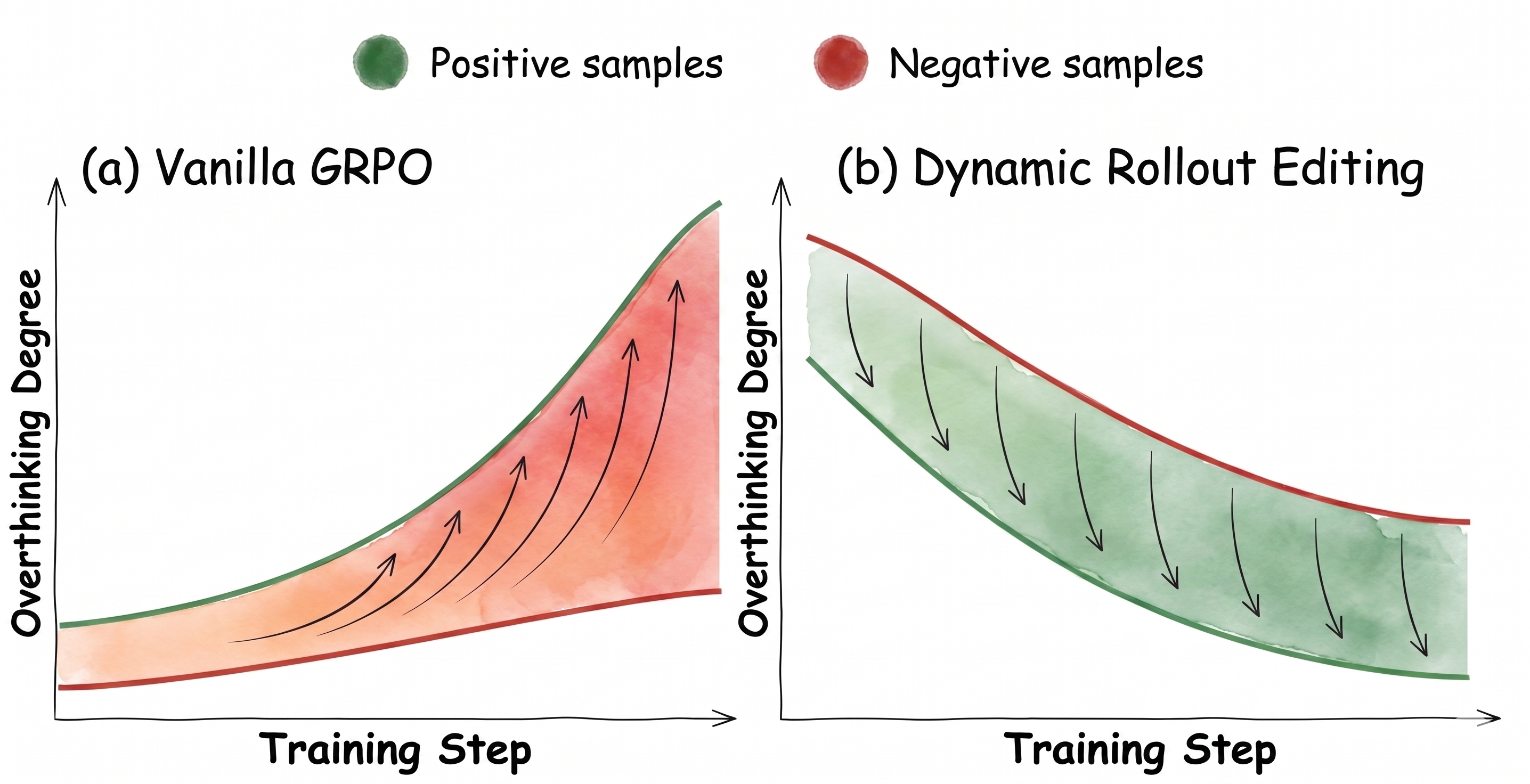}
\caption{Schematic view of training-time overthinking. (a) Vanilla GRPO attaches positive credit to successful trajectories with greater overthinking. (b) Dynamic rollout editing prefers verified shorter edits, reducing the association between success and overthinking.}
\label{fig:causal_pipeline}
\vspace{-12pt}
\end{figure}

Large language models (LLMs) have substantially improved reasoning performance by generating long-form chain-of-thought processes before producing final responses~\cite{DBLP:journals/corr/abs-2501-12948, yang2025qwen3technicalreport, DBLP:journals/corr/abs-2509-18883}. This paradigm is central to reinforcement learning post-training, where models explore reasoning trajectories and learn from outcome feedback. However, longer thinking is not always better: generation length and answer quality are not monotonically related~\cite{ghosal2025doesthinkinghelpmirage, DBLP:journals/corr/abs-2509-23392}. Once a correct answer has appeared, further thinking can increase inference cost and may even move the model away from the correct solution, a phenomenon commonly described as \emph{overthinking}~\cite{DBLP:journals/corr/abs-2503-16419, DBLP:journals/corr/abs-2503-24377}.

Existing work on overthinking largely treats it as a problem of controlling reasoning length: deciding when the model should stop or how its thinking process can be shortened without hurting accuracy. Studies of reasoning dynamics identify boundaries such as the reasoning completion point, where useful reasoning gives way to unnecessary thinking~\cite{DBLP:journals/corr/abs-2502-07266, DBLP:journals/corr/abs-2505-00127, wei2025evolutionthought}. Building on this view, early-exit decoding, RL-based stopping controllers, and length-oriented regularization aim to reduce excessive generation~\cite{DBLP:journals/corr/abs-2504-15895, DBLP:journals/corr/abs-2505-07686, DBLP:journals/corr/abs-2501-12599}. While useful, these methods mainly intervene after the policy has learned its reasoning behavior. 
They leave open how reinforcement learning may reinforce overthinking after a correct answer is already verifiable.

We study this issue in Group Relative Policy Optimization (GRPO)~\cite{DBLP:journals/corr/abs-2402-03300}, a widely used post-training method for reasoning models. GRPO assigns sequence-level rewards and updates the policy using group-relative advantages. This design improves task success, but can also create a credit-assignment problem: a successful trajectory may contain both thinking that leads to the correct answer and unnecessary thinking after the answer has emerged.
Since the same positive advantage is applied to the trajectory as a whole, GRPO does not distinguish the reasoning prefix that leads to the verified answer from the later overthinking that only lengthens the trajectory.

To examine this credit-assignment problem, we make answer emergence observable through a parser-verifier analysis. For each trajectory, we identify the first point at which a correct answer can be extracted and externally verified, allowing us to compare behavior before and after answer emergence. 
In rollouts sampled early in GRPO training, successful trajectories already show a slight tendency to contain more overthinking after answer emergence than unsuccessful trajectories sampled for the same prompts. This initial asymmetry can be amplified by sequence-level updates: when unnecessary thinking is part of a successful trajectory, GRPO may reinforce it together with the thinking that leads to the answer.
As training improves task success, this coupled credit can make overthinking after answer emergence more persistent, including redundant checking, answer revisits, and repeated verification.
This observation motivates a different intervention. Rather than only stopping a trained model earlier, we aim to change the training signal so that task success is less coupled with overthinking after answer emergence. A simple length penalty may suppress useful thinking, while penalizing an entire successful trajectory may weaken the prefix needed to solve the problem. An effective intervention should preserve the verified thinking prefix while reducing the preference signal assigned to subsequent overthinking.

In this work, we introduce \textbf{dynamic rollout editing} (DRE), a training-time intervention for successful trajectories that contain overthinking after answer emergence. Figure~\ref{fig:causal_pipeline} contrasts vanilla GRPO with DRE: vanilla GRPO can reinforce successful trajectories that include overthinking after answer emergence, whereas DRE shifts preference toward verified shorter counterparts.
For a successful trajectory with overthinking after answer emergence, DRE constructs an edited counterpart that preserves the verified prefix up to a conservative boundary and replaces the subsequent overthinking. 
The edited trajectory is then preferred within the same GRPO group, allowing the model to retain a positive training signal for reaching the correct answer while weakening the association between success and overthinking. We further introduce two complementary training strategies: prefix masking avoids directly penalizing the shared verified prefix, while GClip provides a learning signal for edited tokens near the edit point.

\textbf{Contributions.} 
(1) We identify overthinking after answer emergence as a training-time credit-assignment problem in GRPO-style post-training. Our analysis shows that sequence-level advantages can mix credit between reasoning that leads to the answer and unnecessary thinking after answer emergence.
(2) We propose DRE, a training-time intervention that weakens the association between task success and unnecessary thinking after answer emergence. DRE assigns a higher preference to shorter edited counterparts within the same GRPO group while preserving the verified reasoning prefix that reaches the answer.
(3) We demonstrate the broad effectiveness of DRE across mathematical reasoning, science QA, and code generation. DRE reduces thinking length while preserving task performance, mainly after answer emergence.
\section{Overthinking After Answer Emergence}
\label{sec:pathology}

\begin{figure*}[t]
\centering
\includegraphics[width=0.995\linewidth]{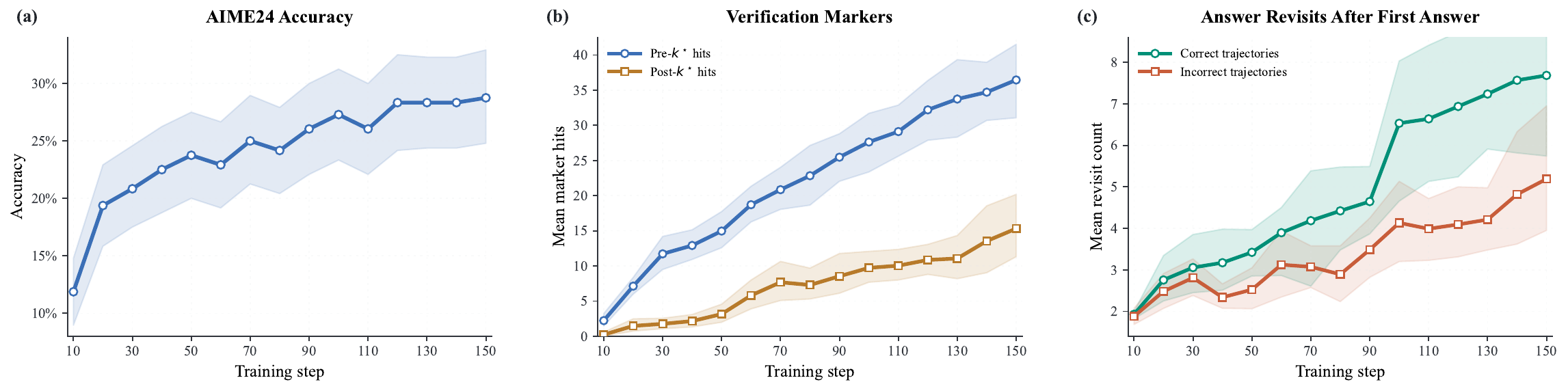}
\caption{Behavioral evidence from an R1-Zero-like GRPO run of Qwen3-4B-Base. \textbf{(a)} AIME24 accuracy improves during training. \textbf{(b)} Verification and checking keywords increase both before and after answer emergence. \textbf{(c)} For same prompts, successful rollouts show more answer revisits than unsuccessful rollouts.}
\label{fig:confound}
\vspace{-12pt}
\end{figure*}

To examine this phenomenon, this section first formulates the generation process and defines answer emergence as the earliest point at which a verified answer can be extracted from a trajectory.
This point provides a concrete boundary for analysis: tokens before it may be needed to reach the answer, whereas tokens after it allow us to measure overthinking once the answer has already appeared. We then present preliminary observations showing that, for the same prompts, successful rollouts often contain more overthinking after answer emergence than unsuccessful rollouts. Finally, we analyze how GRPO can reinforce this behavior: because rewards are assigned at the sequence level, the update does not explicitly distinguish reasoning that leads to the answer from overthinking that follows it when both occur in a successful trajectory.

\subsection{Problem Formulation}

We formalize overthinking after answer emergence in terms of thinking trajectories. Given a prompt $x$, the model generates a thinking process $t$ followed by a content process $y$. We model this generation process as \(\pi_\theta(t, y \mid x)=\pi_\theta(t \mid x)\pi_\theta(y \mid x, t)\). The resulting trajectory is denoted as \(\tau = (x, \texttt{<think>}~t~\texttt{</think>}~y)\).

For each trajectory, we split the thinking process into sentences and scan sentence prefixes from left to right. At each prefix, a parser extracts a candidate answer, and a verifier checks whether the candidate is correct for the given prompt. We define \(k^\star(\tau)\) as the first sentence index whose prefix passes this verification check. If no prefix contains a verified answer, we set \(k^\star(\tau)=K_t(\tau)\), where \(K_t(\tau)\) is the number of thinking sentences in \(\tau\).

Thus, \(k^\star(\tau)\) marks the earliest sentence-level point at which a correct answer becomes verifiable in the thinking process. We use this point as an analysis boundary for measuring overthinking after answer emergence, without assuming that every subsequent token is redundant.

\subsection{Preliminary Observations}

\paragraph{Setting.} 
We train Qwen3-4B-Base with GRPO following the DeepSeek-R1-Zero~\cite{DBLP:journals/corr/abs-2501-12948} setting, using only rule-based rewards for answer correctness and format. This setup allows us to examine whether overthinking after answer emergence increases during RL training.

For each rollout in which a verified answer appears, \(k^\star(\tau)\) denotes the first sentence index at which the prefix contains that answer.
We count keywords related to verification and checking in the sentences before and after \(k^\star(\tau)\).
Keywords in sentences before \(k^\star(\tau)\) may reflect useful checking before the answer is reached, whereas keywords in sentences after \(k^\star(\tau)\) serve as indicators of possible overthinking, since the model continues to check or verify after the answer has already appeared. We then examine whether both quantities increase during training.

To compare successful and unsuccessful rollouts for the same prompt, we use answer revisits as a rollout-level measure, because unsuccessful rollouts may not contain a valid \(k^\star(\tau)\). For each rollout with an extractable final answer, we count how often that answer is repeated after its first occurrence, and compare this count across rollouts sampled for the same prompt.

\textbf{Observation: RL training improves accuracy while amplifying overthinking behaviors such as answer repetition and redundant verification.}
Figure~\ref{fig:confound} (a) illustrates the accuracy gain during training. Figure~\ref{fig:confound} (b) shows that verification and checking keywords increase both before and after \(k^\star(\tau)\). The increase before \(k^\star(\tau)\) is consistent with more useful checking before the verified answer appears. The increase after \(k^\star(\tau)\) suggests more unnecessary overthinking after the answer has already emerged. Figure~\ref{fig:confound} (c) presents a similar pattern within GRPO sampling groups: for the same prompt, successful rollouts repeat their extracted answers more often than unsuccessful rollouts.
A qualitative example in Appendix~\ref{fig:qualitative_rollouts} illustrates this pattern. 
At an earlier training step, the model still requires additional thinking to reach the correct answer; at a later step, it reaches the verified answer but continues with an additional verification step.

These observations motivate the analysis below: sequence-level credit can conflate the thinking that leads to answer emergence with subsequent overthinking, thereby reinforcing the latter when it is associated with positive advantage.

\subsection{Analysis of Overthinking under GRPO}

Under GRPO, each prompt is paired with multiple sampled trajectories. The complete output receives a reward, which is normalized within the sampled group to form a group-relative advantage \(A(\tau)\). The argument below is not the literal clipped GRPO estimator used in training. Instead, it is an unclipped first-order approximation for analysis, evaluated at or near \(\theta_{\mathrm{old}}\), that suppresses policy-ratio, clipping, and content-process terms in order to isolate where the scalar group advantage is broadcast. For readability, we describe the broadcast effect at the sentence level, while the training objective is applied at the token level.

Let \(\psi_s(\tau)=\nabla_\theta \log \pi_\theta(t_s\mid x,t_{<s})\) denote the score term for the \(s\)-th thinking sentence. Let \(Z_t(\tau)\) denote the number of unmasked thinking units in this simplified sentence-level view. Write \(\mathbb{E}_{\mathrm{old}}\) for expectation under \(\tau\sim\pi_{\theta_{\mathrm{old}}}\), and abbreviate \(k^\star(\tau)\) and \(K_t(\tau)\) as \(k^\star\) and \(K_t\) inside the display. For trajectories with no verified thinking prefix, our convention \(k^\star(\tau)=K_t(\tau)\) only makes the post-boundary sum empty; it should not be interpreted as answer emergence at the final sentence. A first-order thinking-component approximation can be written as
\begin{equation}
\label{eq:grad_split}
\begin{aligned}
g_{\mathrm{approx}}(\theta)
&= \mathbb{E}_{\mathrm{old}}\!\left[
\frac{A(\tau)}{Z_t(\tau)} C(\tau)
\right],\\[-0.1em]
C(\tau)
&=
\sum_{s\le k^\star}\psi_s(\tau)
+
\sum_{k^\star<s\le K_t}\psi_s(\tau).
\end{aligned}
\end{equation}

Equation~\eqref{eq:grad_split} shows the credit-broadcasting pathway: the same group-relative advantage multiplies score terms before and after the first verified answer boundary. This does not mean that every post-\(k^\star\) token is necessarily reinforced. Rather, when post-answer continuation is correlated with positive advantage, continuation after \(k^\star\) can inherit the same positive training signal as reasoning that may be needed to expose the verified answer. The matched-prompt evidence above supports this condition: successful rollouts contain more answer revisits and more post-answer checking behavior. Because \(Z_t(\tau)\) token-normalizes the rollout loss, this analysis concerns the direction of broadcast credit rather than gradient magnitude as post-\(k^\star\) continuation length increases.

This analysis suggests two intervention principles. First, the verified prefix before the editable boundary should be preserved, as it may contain reasoning needed to expose the answer. Second, the preference signal should target overthinking after a conservative boundary, rather than penalize total thinking length uniformly.

\begin{figure*}[t]
\centering
\includegraphics[width=\textwidth]{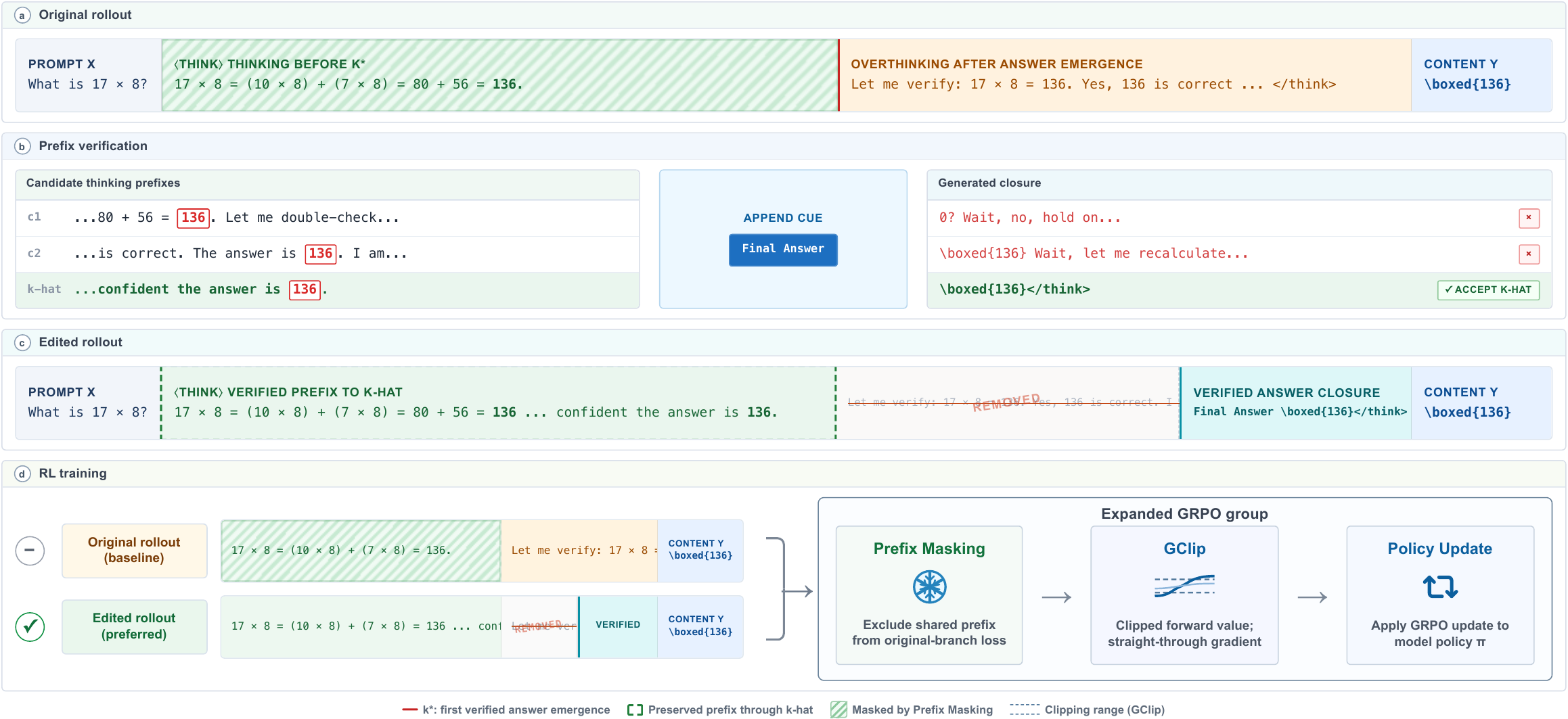}
\caption{Overview of dynamic rollout editing. After \(k^\star\) marks answer emergence, prefix verification selects an editable boundary \(\hat{k}\). The accepted edit preserves the thinking process through \(\hat{k}\), removes the subsequent overthinking, appends a verified final answer, and regenerates the content process. The expanded GRPO group assigns higher preference to accepted verified edits than to verified originals or unedited copies. Prefix masking prevents the shared verified prefix on the original branch from receiving direct negative credit, while GClip preserves learning signal for the preferred edited suffix.}

\label{fig:method}
\vspace{-12pt}
\end{figure*}

\section{Dynamic Rollout Editing}
\label{sec:method}
The analysis in Section~\ref{sec:pathology} motivates our intervention: when overthinking after a verified answer is correlated with positive advantage, trajectory-level rewards can reinforce it through the same gradient signal. We address this issue with \emph{dynamic rollout editing}, which constructs an edited counterpart for an overthinking rollout within the same GRPO group and assigns higher preference to the accepted edit than to the original. The edit preserves the verified prefix, ends the thinking process at an accepted editable boundary, and reduces preference for overthinking beyond that boundary.

\subsection{Edited Rollout Synthesis}

For each rollout, dynamic rollout editing searches near answer emergence for an editable boundary, proposes a trajectory that stops thinking there, and accepts it only if both prefix verification and regenerated-content verification succeed. Accepted edits preserve the verified prefix, remove later thinking, and regenerate the content process; failed edits fall back to an unedited copy.

\paragraph{Editable Boundary Identification.}
The answer-emergence boundary \(k^\star\) marks the earliest prefix from which the parser-verifier extracts the correct answer, but it is not necessarily a safe edit point: the answer may appear inside an unfinished calculation, provisional statement, or incomplete final-answer pattern. We therefore use \(k^\star\) as the starting point for prefix verification, not as the edit point itself. Starting from \(k^\star\), we append \verb|\nFinal Answer| to each candidate sentence prefix and let the policy generate a verified-answer closure. The first prefix whose closure exposes the same verified answer and reaches \texttt{</think>} is accepted as the editable boundary \(\hat{k}\).

\paragraph{Boundary-Based Rollout Synthesis.}
Given \(\hat{k}\), the edit preserves thinking through \(\hat{k}\), removes the original overthinking after \(\hat{k}\), appends the verified answer closure, and regenerates the content process. We keep the edited trajectory only when the regenerated content exposes the same verified answer. Otherwise, the method falls back to an unedited copy. Accepted edits therefore remove overthinking after \(\hat{k}\) without modifying the accepted verified prefix. Figure~\ref{fig:method} summarizes boundary identification, rollout synthesis, and training with edited rollouts.
Appendix~\ref{app:accepted_edit_case} shows a concrete accepted edit in this prefix-preserving form.

\subsection{Edited Rollout Scoring}

The scoring rule turns accepted edits into preferences within the group. For each sampled original trajectory \(\tau_i\), the GRPO group contains one auxiliary trajectory: the accepted edit \(\tilde{\tau}_i\) when both prefix verification and verification of the regenerated content succeed, and an unedited copy otherwise. Let \(V(\rho)=1\) indicate that the content process of trajectory \(\rho\) is verified, and let \(E(\rho)=1\) indicate that \(\rho\) is an accepted edit. The scoring rule is a shaped preference score for the expanded GRPO group, not a calibrated task reward. It encodes an additional preference among verified trajectories: when an accepted edit preserves the verified answer while removing continuation after \(\hat{k}\), the edit should be preferred over the original continuation.
\begin{equation}
\label{eq:trajectory_score}
q(\rho)=
\begin{cases}
1, & V(\rho)=1 \text{ and } E(\rho)=1,\\
0, & V(\rho)=1 \text{ and } E(\rho)=0,\\
-1, & V(\rho)=0.
\end{cases}
\end{equation}
Thus, accepted verified edits are ranked above verified originals or unedited copies, which are in turn ranked above unverified trajectories. The preference between an accepted edit and a verified original is not a claim that the edit is more correct; both pass the verifier. It is reward shaping that adds a training preference for stopping after a conservative verified boundary. Because the scores are normalized within the group to form advantages, their absolute scale is not calibrated; what matters is the induced within-group ranking and spacing.

When the original is also verified, the edit isolates the preference for removing post-boundary continuation while preserving the same verified answer. When the original is not verified but the edit is accepted, the edit is best interpreted as correcting answer drift after a verified prefix. This case changes both correctness and length, whereas verified-original edits isolate the preference for removing post-boundary continuation.

\subsection{Training with Edited Rollouts}
\label{sec:gclip}

This shaped preference changes the GRPO training signal without redefining task correctness: it weakens positive credit for the original thinking after the verified answer boundary.
For accepted edit pairs, this preference reversal creates two optimization issues during training, \emph{i.e.},

\textbf{(1) Negative credit on the shared prefix.}
The original trajectory receives a lower preference score even though it shares the verified prefix with the edited trajectory. Without a safeguard, the resulting lower advantage would be applied to the entire original trajectory, including both the accepted verified prefix and the overthinking after \(\hat{k}\).

\textbf{(2) Upper clipping on edited tokens.} The edited suffix begins with the verified final answer string and regenerated content rather than the original rollout continuation. These positively preferred tokens can have low probability under the old policy near \(\hat{k}\), so ordinary clipping can remove the ascent signal where the edit needs to be learned.

\paragraph{Prefix Masking for Preserving the Verified Prefix.}
For an accepted pair \((\tau,\tilde{\tau})\), the original and edited trajectories share the same generated prefix through \(\hat{k}\). After this prefix, the edited rollout begins the verified final answer string, whereas the original keeps thinking after \(\hat{k}\). Prefix masking applies only to the original trajectory: it excludes the generated tokens in this shared prefix from that trajectory's GRPO loss, while keeping unit weight on tokens after the divergence. The edited trajectory and all other trajectories keep unit weights. Thus, negative relative credit from the original targets its post-\(\hat{k}\) continuation rather than the accepted verified prefix shared with the edit.

\begin{table*}[t]
\centering
\newcommand{\tokpct}[2]{\makebox[3.0em][r]{#1}\,{\scriptsize\makebox[4.2em][r]{(#2\%)}}}
\resizebox{\textwidth}{!}{
\begin{tabular}{lrrrrrrrrrrrrr}
\toprule
\multirow{2}{*}{\textbf{Method}} & \multicolumn{2}{c}{\textbf{AIME24}} & \multicolumn{2}{c}{\textbf{AIME25}} & \multicolumn{2}{c}{\textbf{AIME26}} & \multicolumn{2}{c}{\textbf{GPQA-D}} & \multicolumn{2}{c}{\textbf{LCB-V6}} & \multicolumn{3}{c}{\textbf{Average}}\\
\cmidrule(lr){2-3} \cmidrule(lr){4-5} \cmidrule(lr){6-7} \cmidrule(lr){8-9} \cmidrule(lr){10-11} \cmidrule(lr){12-14}
 & \textbf{Think Tok.}$\downarrow$ & \textbf{Acc}$\uparrow$ & \textbf{Think Tok.}$\downarrow$ & \textbf{Acc}$\uparrow$ & \textbf{Think Tok.}$\downarrow$ & \textbf{Acc}$\uparrow$ & \textbf{Think Tok.}$\downarrow$ & \textbf{Acc}$\uparrow$ & \textbf{Think Tok.}$\downarrow$ & \textbf{Acc}$\uparrow$ & \textbf{Think Tok.}$\downarrow$ & \textbf{Acc}$\uparrow$ & \textbf{TE}$\uparrow$\\
\midrule
\multicolumn{14}{l}{\emph{Based on Qwen3-4B-Thinking-2507}}\\
\midrule
Raw Model & \tokpct{18923}{100.0} & \textbf{81.87} & \tokpct{21510}{100.0} & 79.79 & \tokpct{20789}{100.0} & 77.71 & \tokpct{8091}{100.0} & 65.47 & \tokpct{17722}{100.0} & \textbf{54.26} & \tokpct{17407}{100.0} & 71.82 & 4.13\\
GRPO & \tokpct{18322}{96.8} & 81.17 & \tokpct{20930}{97.3} & \textbf{80.83} & \tokpct{20284}{97.6} & 76.67 & \tokpct{7826}{96.7} & 65.26 & \tokpct{16333}{92.2} & 53.64 & \tokpct{16739}{96.2} & 71.51 & 4.27\\
GRPO + LP & \tokpct{19323}{102.1} & 79.50 & \tokpct{21059}{97.9} & 77.08 & \tokpct{20302}{97.7} & 75.62 & \tokpct{7906}{97.7} & \textbf{66.60} & \tokpct{16663}{94.0} & 53.57 & \tokpct{17051}{98.0} & 70.47 & 4.13\\
S-GRPO & \tokpct{19797}{104.6} & 80.00 & \tokpct{22298}{103.7} & 79.17 & \tokpct{21304}{102.5} & 78.33 & \tokpct{8435}{104.3} & 65.59 & \tokpct{18460}{104.2} & 54.14 & \tokpct{18059}{103.7} & 71.45 & 3.96\\
\midrule
\textbf{DRE} & \textbf{\tokpct{13092}{69.2}} & \textbf{81.87} & \textbf{\tokpct{14882}{69.2}} & 80.54 & \textbf{\tokpct{14431}{69.4}} & \textbf{79.37} & \textbf{\tokpct{5689}{70.3}} & 65.67 & \textbf{\tokpct{14630}{82.6}} & 53.79 & \textbf{\tokpct{12545}{72.1}} & \textbf{72.25} & \textbf{5.76}\\
\midrule
\multicolumn{14}{l}{\emph{Based on Qwen3-8B}}\\
\midrule
Raw Model & \tokpct{13988}{100.0} & 75.42 & \tokpct{17336}{100.0} & \textbf{68.75} & \tokpct{15862}{100.0} & 66.67 & \tokpct{8683}{100.0} & 59.34 & \tokpct{12177}{100.0} & 47.93 & \tokpct{13609}{100.0} & 63.62 & 4.67\\
GRPO & \tokpct{13773}{98.5} & 75.62 & \tokpct{16801}{96.9} & 64.79 & \tokpct{15581}{98.2} & 65.21 & \tokpct{8474}{97.6} & 60.61 & \tokpct{12197}{100.2} & \textbf{49.50} & \tokpct{13365}{98.2} & 63.15 & 4.73\\
GRPO + LP & \tokpct{12968}{92.7} & 74.17 & \tokpct{16919}{97.6} & 66.04 & \tokpct{15107}{95.2} & 63.96 & \tokpct{8159}{94.0} & 60.10 & \tokpct{11995}{98.5} & 48.64 & \tokpct{13030}{95.7} & 62.58 & 4.80\\
S-GRPO & \tokpct{16133}{115.3} & 74.17 & \tokpct{18090}{104.3} & 65.83 & \tokpct{17707}{111.6} & 62.50 & \tokpct{9660}{111.3} & 59.79 & \tokpct{12293}{101.0} & 48.86 & \tokpct{14777}{108.6} & 62.23 & 4.21\\
\midrule
\textbf{DRE} & \textbf{\tokpct{10163}{72.7}} & \textbf{75.83} & \textbf{\tokpct{13448}{77.6}} & 66.67 & \textbf{\tokpct{11381}{71.8}} & \textbf{66.87} & \textbf{\tokpct{6658}{76.7}} & \textbf{60.98} & \textbf{\tokpct{11210}{92.1}} & 48.93 & \textbf{\tokpct{10572}{77.7}} & \textbf{63.86} & \textbf{6.04}\\
\midrule
\multicolumn{14}{l}{\emph{Based on Qwen3-30B-A3B-Thinking-2507}}\\
\midrule
Raw Model & \tokpct{15299}{100.0} & 88.12 & \tokpct{18095}{100.0} & \textbf{85.62} & \tokpct{16985}{100.0} & \textbf{88.96} & \tokpct{6922}{100.0} & \textbf{76.45} & \tokpct{15559}{100.0} & 64.64 & \tokpct{14572}{100.0} & \textbf{80.76} & 5.54\\
GRPO & \tokpct{13784}{90.1} & 88.75 & \tokpct{15600}{86.2} & 81.87 & \tokpct{15158}{89.2} & 84.58 & \tokpct{6236}{90.1} & 74.68 & \tokpct{15413}{99.1} & 63.43 & \tokpct{13238}{90.8} & 78.66 & 5.94\\
GRPO + LP & \tokpct{14374}{94.0} & \textbf{88.96} & \tokpct{16206}{89.6} & 82.71 & \tokpct{16039}{94.4} & 85.00 & \tokpct{6435}{93.0} & 75.00 & \tokpct{15494}{99.6} & 63.93 & \tokpct{13710}{94.1} & 79.12 & 5.77\\
S-GRPO & \tokpct{15821}{103.4} & 88.33 & \tokpct{20501}{113.3} & 81.67 & \tokpct{19133}{112.6} & 85.00 & \tokpct{7000}{101.1} & 74.68 & \tokpct{17028}{109.4} & 64.79 & \tokpct{15897}{109.1} & 78.89 & 4.96\\
\midrule
\textbf{DRE} & \textbf{\tokpct{10407}{68.0}} & 88.54 & \textbf{\tokpct{13040}{72.1}} & 83.54 & \textbf{\tokpct{12153}{71.6}} & 86.46 & \textbf{\tokpct{5318}{76.8}} & 75.51 & \textbf{\tokpct{13977}{89.8}} & \textbf{64.86} & \textbf{\tokpct{10979}{75.3}} & 79.78 & \textbf{7.27}\\
\bottomrule
\end{tabular}
}
\caption{Main comparison across three Qwen3 reasoning models. Percentages in parentheses report thinking-token use relative to the Raw Model under the same base model and benchmark. The best result is shown in \textbf{bold}.}
\label{tab:method_comparison}
\end{table*}

\paragraph{GClip for Learning from Edited Rollouts.}
\emph{GClip} handles the second issue by using a clipped ratio operator with a custom backward pass for edited tokens. Accepted edits are synthetic continuations constructed after rollout sampling, so the expanded-group objective should be read as a GRPO-style preference surrogate over original and edited trajectories rather than as an unbiased on-policy GRPO estimator. We retain the ratio to \(\pi_{\theta_{\mathrm{old}}}\) to keep the forward surrogate clipped and to measure compatibility with the old policy. Because the edited suffix no longer follows the original rollout path, its early tokens can have very low old-policy probability near \(\hat{k}\). Positively preferred edited tokens can then hit the upper clip boundary quickly, causing ordinary clipping to remove the ascent signal where the edit needs to be learned.

For generated tokens \(z_1,\ldots,z_m\), let the token-level policy ratio be
\begin{equation}
\label{eq:policy_ratio}
r_s(\theta)\triangleq\frac{\pi_\theta(z_s \mid x,z_{<s})}{\pi_{\theta_{\mathrm{old}}}(z_s \mid x,z_{<s})}.
\end{equation}
For any scalar ratio \(r>0\) and clip bounds \(0<\ell<u\), GClip has the same forward scalar value as the ordinary clipped ratio, but it changes the backward pass through a stop-gradient construction:
\begin{equation}
\label{eq:gclip}
\operatorname{GClip}(r;\ell,u)\triangleq
\operatorname{sg}\!\bigl(\mathrm{clip}(r,\ell,u)\bigr)
\cdot \frac{r}{\operatorname{sg}(r)}.
\end{equation}
Here \(\operatorname{sg}(\cdot)\) denotes the stop-gradient operator. GClip should therefore be understood as a straight-through clipped-ratio operator: it preserves the clipped forward surrogate value while replacing the zero-gradient behavior of the ordinary clipped branch outside the clipping interval. In the training objective below, GClip replaces the clipped-ratio term in the GRPO-style surrogate. Theorem~\ref{thm:first_order_consistency} formalizes this straight-through derivative.

\paragraph{Overall Training Objective.}
Let \(A(\rho)\) be the group-relative advantage obtained by normalizing the scores in Equation~\eqref{eq:trajectory_score} within the expanded GRPO group, and write \(\operatorname{GClip}_\epsilon(r)=\operatorname{GClip}(r;1-\epsilon,1+\epsilon)\). We integrate GClip into GRPO by replacing the clipped-ratio term with \(\operatorname{GClip}_\epsilon\), giving the token-level surrogate
\begin{equation}
\label{eq:token_loss}
\begin{aligned}
\ell_s(\theta;\rho)
&=-\min\!\Bigl(
r_s(\theta)A(\rho),\\
&\qquad\qquad
\operatorname{GClip}_\epsilon(r_s(\theta))A(\rho)
\Bigr).
\end{aligned}
\end{equation}
For accepted edited suffixes, the main saturation case is \(A(\rho)>0\) with \(r_s>1+\epsilon\): ordinary clipping would remove the ascent signal on the clipped branch. GClip keeps the same clipped forward value while supplying a straight-through backward signal, so preferred edited tokens can still be learned. This should be read as an optimization surrogate rather than a formal PPO trust-region guarantee; Appendix~\ref{app:gclip_theorem} gives the full derivative analysis, including the lower-side saturation case.

For an original trajectory paired with an accepted edit, \(w_s(\rho)=0\) on generated tokens in the shared prefix through \(\hat{k}\) and \(w_s(\rho)=1\) otherwise. For its edited counterpart and all other trajectories, \(w_s(\rho)=1\) at every generated token. The rollout loss is
\begin{equation}
\label{eq:rollout_loss}
\mathcal{L}(\theta;\rho)
=
\frac{\sum_s w_s(\rho)\ell_s(\theta;\rho)}
{\sum_s w_s(\rho)}.
\end{equation}
This objective targets post-\(\hat{k}\) thinking continuation in accepted edited pairs while preventing the lower-scored original branch from directly penalizing the shared verified prefix.

\section{Experiments}
\label{sec:experiments}

\subsection{Experimental Setups}

\vpara{Benchmarks.}
We conduct experiments on diverse benchmarks spanning mathematical reasoning, science QA, and code generation: AIME24, AIME25, and AIME26~\cite{maa_aime} for math, GPQA Diamond (GPQA-D)~\cite{rein2023gpqagraduatelevelgoogleproofqa} for graduate-level science QA, and LiveCodeBench V6 (LCB-V6)~\cite{DBLP:conf/iclr/JainHGLYZWSSS25} for code.

\vpara{Metrics.}
We report accuracy (Acc), mean thinking tokens (Think Tok.), and token efficiency (TE). \textbf{Acc} indicates task performance, and \textbf{Think Tok.} measures the average length of the generated thinking process. We also report token efficiency (\textbf{TE}) to summarize the accuracy--token tradeoff, defined as \(\mathrm{TE}=1000\cdot\overline{\mathrm{Acc}}/\overline{L_t}\), where \(\overline{\mathrm{Acc}}\) and \(\overline{L_t}\) denote the average accuracy and thinking-token count over the benchmarks in the table. More details are provided in Appendix~\ref{app:experimental_details}.

\vpara{Baselines.} We evaluate Dynamic Rollout Editing (DRE) on three Qwen3 reasoning models: Qwen3-4B-Thinking-2507, Qwen3-8B, and Qwen3-30B-A3B-Thinking-2507~\cite{yang2025qwen3technicalreport}. For each backbone, \emph{Raw Model} denotes the corresponding original Qwen3 model without further training. We compare DRE with Raw Model, GRPO~\cite{DBLP:journals/corr/abs-2402-03300}, GRPO + LP~\cite{DBLP:journals/corr/abs-2501-12599}, and S-GRPO~\cite{DBLP:journals/corr/abs-2505-07686}. All trainable methods are trained on DAPO-Math-17k~\cite{DBLP:journals/corr/abs-2503-14476}.

\begin{figure*}[t]
\centering
\includegraphics[width=\textwidth]{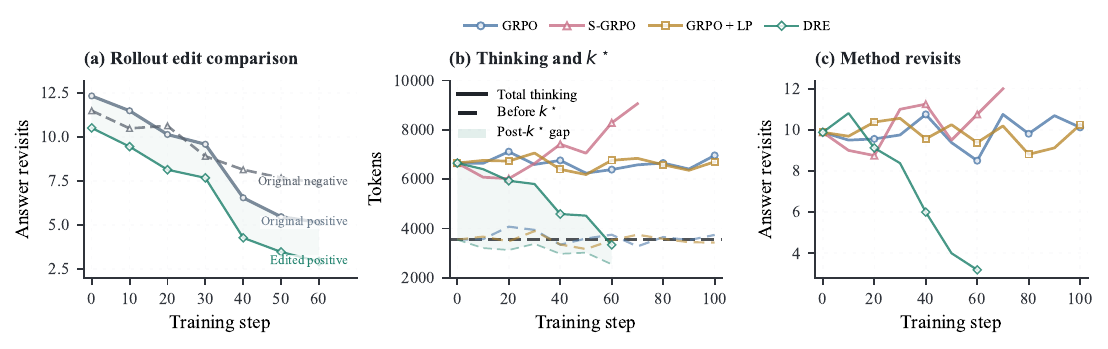}
\caption{Evaluation on selected AIME24 samples across training steps. Colors and markers indicate different methods. In panel (b), solid curves show total thinking length, dashed curves show thinking up to \(k^\star\), and the shaded region highlights the reduction after \(k^\star\) under DRE. \textbf{(a)} Edited positive trajectories contain fewer answer revisits than their original counterparts. \textbf{(b)} DRE reduces thinking mainly after \(k^\star\). \textbf{(c)} Answer revisits decrease more substantially under DRE than under the baselines.}
\label{fig:aime24_method_dynamics}
\end{figure*}

\subsection{Main Results}
As shown in Table~\ref{tab:method_comparison}, we make four empirical observations: \textbf{(1) DRE substantially reduces thinking tokens while preserving benchmark accuracy.} Among methods with complete LCB-V6 results, DRE has the lowest five-benchmark average thinking length and the highest token efficiency in every model block, using only 72.1\%, 77.7\%, and 75.3\% of Raw Model thinking tokens on the three backbones. On Qwen3-4B-Thinking-2507 and Qwen3-8B, this reduction is accompanied by higher average accuracy than both Raw Model and GRPO. On Qwen3-30B-A3B-Thinking-2507, where all trained variants trail Raw Model, DRE stays closest to Raw Model in accuracy while using the fewest thinking tokens among trained methods. The largest-model result should therefore be read as stronger length control with competitive accuracy, rather than as a uniform accuracy gain.
\textbf{(2) The reduction is not confined to the math training distribution.} Although all trainable methods use DAPO-Math-17k, DRE shortens thinking across held-out AIME evaluations, GPQA-D, and out-of-domain LCB-V6. Relative to GRPO, DRE uses fewer thinking tokens on all nine AIME model--benchmark rows, all three GPQA-D rows, and all three LCB-V6 rows. Its accuracy also matches or exceeds GRPO on seven AIME rows, improves GPQA-D accuracy in each model block, and stays within 0.6 accuracy points of the best completed method on LCB-V6. This cross-benchmark pattern is consistent with reducing redundant post-solution continuation rather than only pruning math-specific thinking.
\textbf{(3) Generic length pressure does not reproduce the same pattern.} GRPO + LP sometimes shortens thinking relative to GRPO, but it remains longer than DRE on the five-benchmark average and yields less stable accuracy. This contrast is consistent with the design difference: DRE edits after the verified solution boundary while preserving the solution-reaching prefix, whereas a length penalty does not distinguish necessary thinking from redundant continuation.
\textbf{(4) The edited preference signal covers most sampled trajectories.} During DRE training, the edit-and-verify procedure accepted edits for 87.20\% of sampled trajectories, and the remaining 12.80\% used the conservative unedited fallback. Among accepted edits, 99.40\% preserved verifier correctness after content regeneration. Thus the preference term in Equation~\eqref{eq:trajectory_score} is supported by most sampled rollouts, rather than by a small exceptional subset.

\subsection{Overthinking Analysis}

We next examine whether the reduction in thinking length occurs mainly after answer emergence. Figure~\ref{fig:aime24_method_dynamics} evaluates selected AIME24 samples across training steps. Panel (a) compares original positive trajectories, original negative trajectories, and edited positive trajectories. Original positive trajectories show more answer revisits than original negative trajectories, whereas edited positive trajectories contain fewer revisits than their original counterparts and become the group with the fewest revisits during training. Panel (b) shows that DRE reduces total thinking length much more than thinking before \(k^\star\), indicating that most of the reduction occurs after answer emergence. Panel (c) shows the same trend at the method level: answer revisits decrease more under DRE than under the baselines. In contrast, GRPO + LP leaves these measures largely unchanged, suggesting that the effect is not explained by a generic preference for shorter trajectories. Appendix~\ref{app:semantic_dre_dynamics} provides a complementary semantic-space visualization of the same intervention effect, and Appendix~\ref{app:ablations} reports ablations that remove each component individually.

\section{Related Work}
Prior work reduces overthinking by controlling reasoning length during post-training, shortening inference-time generation, or explicitly deciding when to stop~\cite{DBLP:journals/corr/abs-2503-16419,DBLP:journals/corr/abs-2503-24377}. Post-training methods often use short or variable-length thinking supervision, length rewards or penalties, or latent reasoning schemes~\cite{DBLP:journals/corr/abs-2407-06023,DBLP:journals/corr/abs-2501-12599,DBLP:journals/corr/abs-2503-04697,DBLP:journals/corr/abs-2412-06769}. Inference-time methods instead shorten an already trained policy through prompts, decoding controls, early-exit signals, or stopping controllers~\cite{DBLP:journals/corr/abs-2504-15895,DBLP:journals/corr/abs-2505-07686,wei2025evolutionthought}. Our work is complementary: we analyze how GRPO-style credit assignment can reinforce overthinking after answer emergence, and use dynamic rollout editing to preserve necessary reasoning while reducing preference for unnecessary thinking after the answer becomes verifiable. Appendix~\ref{app:related_work} gives a fuller discussion.
\section{Conclusion}

We study why RL-trained reasoning models continue thinking after a correct answer has already emerged. Our analysis suggests that, in GRPO-style training, successful rollouts can contain more overthinking than unsuccessful rollouts sampled for the same prompts, and sequence-level rewards may assign positive credit to both reaching the answer and overthinking after it. In this work, we propose DRE, which prefers verified shorter trajectories within the same RL group while preserving the reasoning prefix that reaches the answer. 
Experiments on diverse benchmarks covering mathematical reasoning, science QA, and code generation validate the effectiveness of DRE. The results show that DRE reduces thinking length while preserving overall task performance, and that this reduction occurs mainly after answer emergence.
\section*{Limitations}

RL training in this work is conducted only on DAPO-Math-17K~\cite{DBLP:journals/corr/abs-2503-14476}, a dataset of math problems with verifiable answers. This setting supports reliable identification of answer emergence, which is required by dynamic rollout editing, but it also limits the diversity of training signals used to learn shorter reasoning behavior. Although the resulting model reduces thinking tokens on out-of-domain code-generation benchmarks, the gains are smaller than those observed on math tasks. Extending dynamic rollout editing to more diverse verifiable RL training data remains an important direction for improving and assessing token savings across domains.
\section*{Ethical Considerations}

This work aims to reduce overthinking after answer emergence in RL-trained reasoning models. The main potential benefit is improved inference efficiency, which can lower compute cost and associated resource use. Since shorter thinking processes may not be equally suitable for all tasks, deployments should be preceded by evaluation in the intended target domain, especially when additional verification steps are important for reliability.

\bibliography{references}

\appendix

\section{Related Work}
\label{app:related_work}

We organize related work according to the stage at which prior methods intervene to control reasoning length: during post-training, at inference through prompting or decoding, or via explicit early-exit mechanisms. This organization follows recent taxonomies of efficient reasoning~\cite{DBLP:journals/corr/abs-2503-16419,DBLP:journals/corr/abs-2503-24377}.

\paragraph{Post-training methods.}
Training-time approaches control reasoning length by modifying supervision signals, reward functions, or the latent reasoning space. Representative directions include supervised fine-tuning with variable-length chain-of-thought data~\cite{DBLP:journals/corr/abs-2407-06023,DBLP:conf/aaai/KangSCZ25,DBLP:journals/corr/abs-2502-12067,DBLP:journals/corr/abs-2502-09601}, RL objectives with explicit length rewards or penalties~\cite{DBLP:journals/corr/abs-2501-12599,DBLP:journals/corr/abs-2501-12570,DBLP:journals/corr/abs-2503-04697}, and latent-space reasoning schemes~\cite{DBLP:journals/corr/abs-2412-06769,DBLP:journals/corr/abs-2412-13171,DBLP:journals/corr/abs-2502-21074}. The proposed method also intervenes during post-training, but differs in mechanism: rather than globally rewarding concise reasoning or requiring short chain-of-thought supervision, it edits high-reward RL rollouts at the point where a verified solution first appears.

\paragraph{Prompt-based methods.}
Prompting strategies elicit shorter or more disciplined chains of thought by instructing the model to be concise, decisive, or self-monitoring~\cite{DBLP:journals/corr/abs-2412-18547,DBLP:journals/corr/abs-2501-19393,DBLP:journals/corr/abs-2503-01141,openai2025gptoss120bgptoss20bmodel,DBLP:conf/aaai/KangSCZ25,DBLP:journals/corr/abs-2505-23480,DBLP:journals/corr/abs-2508-05337}. These methods are appealing because they require no retraining, but they tend to be brittle: their effectiveness depends heavily on prompt wording, and a single prompting policy often generalizes poorly across tasks with different reasoning depths or answer formats.

\paragraph{Early-exit methods.}
Early-exit approaches intervene directly in the generation process by terminating reasoning once the model appears to have reached a solution. Some methods rely on external monitors or additional training, such as auxiliary classifiers~\cite{liu-wang-2025-answer}, bandit controllers~\cite{DBLP:journals/corr/abs-2510-10103}, difficulty estimators~\cite{DBLP:journals/corr/abs-2504-13367}, RL-based stopping policies such as S-GRPO~\cite{DBLP:journals/corr/abs-2505-07686}, or self-aware guided sampling as in SAGE-RL~\cite{DBLP:journals/corr/abs-2602-08354}. Others pursue lower-overhead stopping rules based on intrinsic model signals, such as DEER~\cite{DBLP:journals/corr/abs-2504-15895}, or are motivated by reasoning-dynamics analyses that characterize an instance-specific post-convergence regime via a reasoning completion point~\cite{wei2025evolutionthought}. The present work is complementary to this line of research: rather than stopping generation through a controller or sampling rule, it edits sampled rollouts after verified answer emergence and trains against the later continuation.

\section{Experimental Details}
\label{app:experimental_details}

\paragraph{Training data and rollout sampling.}
All training-time variants are trained on DAPO-Math-17k~\cite{DBLP:journals/corr/abs-2503-14476}. During training, we use a maximum generation length of 28,000 tokens. Each optimization step uses a batch size of 64 prompts, and each prompt contributes an effective group of 8 trajectories or candidates for group-relative normalization. For GRPO and GRPO + LP, these are 8 sampled rollouts. For DRE, we sample 4 original rollouts and construct 4 auxiliary counterparts; each auxiliary trajectory is an accepted edit when both prefix verification and verification of the regenerated content process succeed, and an unedited copy otherwise. For S-GRPO, we sample 1 rollout and construct 8 candidates from that rollout following its stopping-policy construction.

\paragraph{Training horizon and checkpoint selection.}
For DRE, we select the first checkpoint whose validation-set mean full thinking length is shorter than the raw model's validation-set mean \(k^\star\) token position. This stopping rule selects the point at which the trained model's complete thinking process has become shorter than the raw model's answer-emergence boundary on the validation set. For the other trained methods, we train to 100 optimization steps and report the step-100 checkpoint. In S-GRPO runs that collapse before step 100 on a training set, we instead report the last checkpoint before collapse and do not use post-collapse checkpoints.

\paragraph{Optimization hyperparameters.}
We do not use a KL penalty in any training variant, i.e., the KL coefficient is set to zero. For ordinary clipped GRPO objectives, including the GRPO, GRPO + LP, S-GRPO, and \emph{Standard Clip} settings, we use asymmetric ratio clipping with lower-side \(\epsilon_{\downarrow}=0.20\) and upper-side \(\epsilon_{\uparrow}=0.28\), corresponding to ratio bounds \([0.80,1.28]\). For GClip in DRE and the GClip-based ablations, we use \(\epsilon_{\downarrow}=\epsilon_{\uparrow}=0.10\), corresponding to ratio bounds \([0.90,1.10]\).

\paragraph{GRPO + LP length reward.}
GRPO + LP is a length-penalty baseline following the group-relative length reward used in Kimi k1.5~\cite{DBLP:journals/corr/abs-2501-12599}. Within each sampled group, let \(r(x,y_i,y^\star)\in\{0,1\}\) be the rule-based correctness indicator for response \(y_i\), let \(\operatorname{len}(i)\) be its generated length, and let \(\operatorname{min\_len}\) and \(\operatorname{max\_len}\) be the minimum and maximum generated lengths in the same group. If \(\operatorname{max\_len}=\operatorname{min\_len}\), we set \(\lambda_i=0\). Otherwise, we set
\[
\lambda_i
=0.5-
\frac{\operatorname{len}(i)-\operatorname{min\_len}}
{\operatorname{max\_len}-\operatorname{min\_len}},
\]
and define the auxiliary length reward as
\[
r_{\mathrm{len}}(i)=
\begin{cases}
\lambda_i, & r(x,y_i,y^\star)=1,\\
\min(0,\lambda_i), & r(x,y_i,y^\star)=0.
\end{cases}
\]
Thus shorter correct responses receive larger auxiliary reward, while incorrect responses receive no positive length bonus.

\paragraph{Evaluation protocol.}
At evaluation time, we use temperature 0.6 for all benchmarks. Unless otherwise noted, for AIME (AIME I and II combined for each year), we sample 16 responses per problem with a maximum generation length of 81,920 tokens. For GPQA-D, we sample 8 responses per problem with a maximum generation length of 32,768 tokens. For LCB-V6, we sample 8 responses per problem with a maximum generation length of 81,920 tokens. For Qwen3-8B, we use a 40,960-token generation cap for all benchmarks due to our serving and evaluation configuration. A small number of method--benchmark runs use fewer completed evaluation samples because of time constraints. We report Acc as per-sample accuracy over completed sampled responses. Think Tok. is the mean number of generated tokens inside the delimited thinking process, counted between the first \texttt{<think>} marker and the first subsequent \texttt{</think>} marker. If \texttt{</think>} is absent, we count generated tokens after \texttt{<think>} until the generation stops or reaches the maximum generation length. Truncated outputs are scored as generated, without repair or regeneration. For mechanism-facing analyses, we additionally track mean \(k^\star\), answer revisits after \(k^\star\), and old-policy token probabilities on accepted edited suffixes after \(\hat{k}\).

\paragraph{Token efficiency.}
In Table~\ref{tab:method_comparison}, \textbf{TE} denotes token efficiency:
\[
\mathrm{TE}=1000\cdot\frac{\overline{\mathrm{Acc}}}{\overline{L_t}},
\]
where \(\overline{\mathrm{Acc}}\) is the average accuracy in percentage points and \(\overline{L_t}\) is the average thinking-token count over the benchmarks included in the table. TE can be read as accuracy points per 1,000 thinking tokens, so larger values indicate a better accuracy--token tradeoff.

\section{Ablations}
\label{app:ablations}

We report leave-one-out ablations evaluated on selected AIME24 samples across training steps. The \emph{Full Method} combines prefix verification, boundary-based editing, prefix masking, and GClip. \emph{No Prefix Verification} removes the prefix-closure check while retaining the final regenerated-content correctness check. \emph{No Prefix Masking} removes the pair-local mask that excludes the original branch's direct loss on the shared thinking prefix. \emph{Standard Clip} replaces GClip with ordinary clipping near the edited boundary and in the early edited suffix. These ablations test whether each safeguard is needed to reduce overthinking while preserving task performance. Figure~\ref{fig:aime24_ablation_dynamics} reports accuracy, total thinking length, thinking before \(k^\star\), and content process length.

\begin{figure*}[t]
\centering
\includegraphics[width=0.96\textwidth]{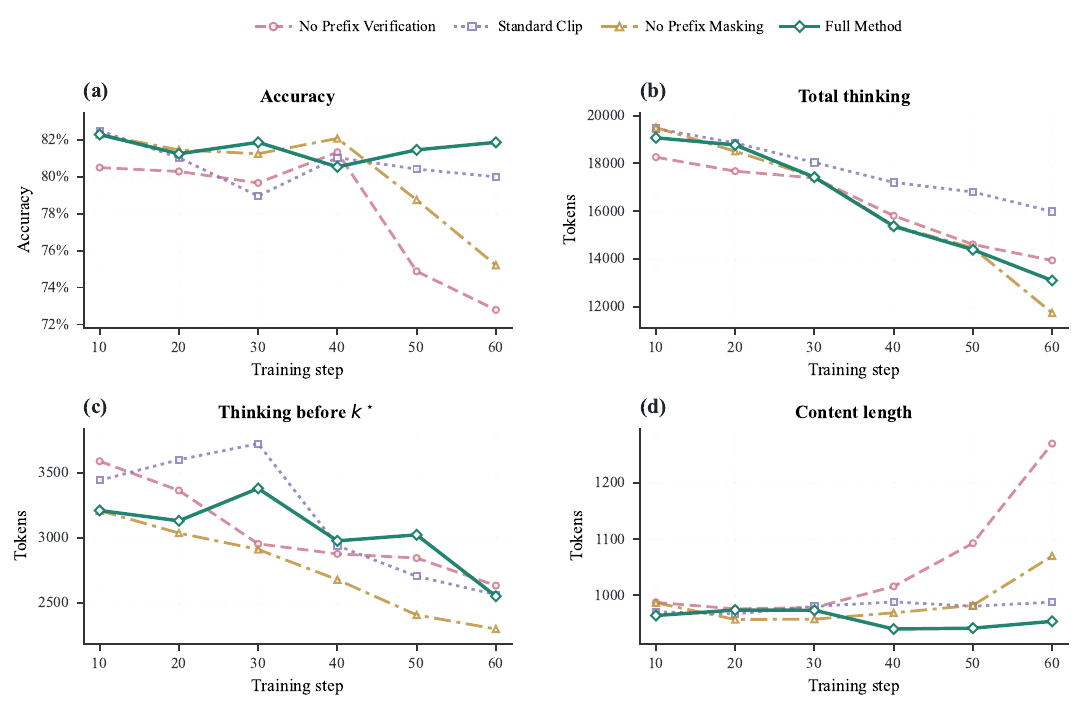}
\caption{Leave-one-out ablations evaluated on selected AIME24 samples across training steps. The \emph{Full Method} maintains accuracy while reducing total thinking and keeping content process length stable. \textbf{(a)} Accuracy. \textbf{(b)} Total thinking. \textbf{(c)} Thinking before \(k^\star\). \textbf{(d)} Content process length.}
\label{fig:aime24_ablation_dynamics}
\end{figure*}

The ablations show distinct failure modes. Without prefix verification, accuracy declines while content length increases, suggesting that some accepted prefixes are not yet ready to expose the verified solution and the regenerated content process must complete reasoning that should have remained in thinking. Without prefix masking, thinking before \(k^\star\) drops the most and accuracy later degrades, consistent with negative credit leaking into the shared verified prefix. With \emph{Standard Clip}, total thinking declines more slowly than under the \emph{Full Method}, consistent with ordinary clipping weakening the intended ascent signal near the edited boundary; Appendix~\ref{app:gclip_mismatch} gives a representative old-policy mismatch example. Overall, the ablation results support combining prefix verification, prefix masking, and GClip.

\section{Supplementary Early-Exit Comparison}
\label{app:early_exit_comparison}

Table~\ref{tab:deer_dre_appendix} situates DRE against two training-free inference-time stopping baselines, DEER~\cite{DBLP:journals/corr/abs-2504-15895} and RCPD~\cite{wei2025evolutionthought}. We report this supplementary comparison on AIME24, AIME25, AIME26, and GPQA-D, where we ran both early-exit baselines under the same evaluation pipeline. LCB-V6 is omitted from this table because we did not validate RCPD under the same LCB-V6 protocol. The Average columns are therefore computed over these four benchmarks and should be interpreted within this supplementary comparison, rather than compared directly with the five-benchmark averages in Table~\ref{tab:method_comparison}.

\begin{table*}[t]
\centering
\resizebox{\textwidth}{!}{
\begin{tabular}{llrrrrrrrrrr}
\toprule
\multirow{2}{*}{\textbf{Backbone}} & \multirow{2}{*}{\textbf{Method}} & \multicolumn{2}{c}{\textbf{AIME24}} & \multicolumn{2}{c}{\textbf{AIME25}} & \multicolumn{2}{c}{\textbf{AIME26}} & \multicolumn{2}{c}{\textbf{GPQA-D}} & \multicolumn{2}{c}{\textbf{Average}}\\
\cmidrule(lr){3-4} \cmidrule(lr){5-6} \cmidrule(lr){7-8} \cmidrule(lr){9-10} \cmidrule(lr){11-12}
 & & \textbf{Think Tok.}$\downarrow$ & \textbf{Acc}$\uparrow$ & \textbf{Think Tok.}$\downarrow$ & \textbf{Acc}$\uparrow$ & \textbf{Think Tok.}$\downarrow$ & \textbf{Acc}$\uparrow$ & \textbf{Think Tok.}$\downarrow$ & \textbf{Acc}$\uparrow$ & \textbf{Think Tok.}$\downarrow$ & \textbf{Acc}$\uparrow$\\
\midrule
Qwen3-4B-Thinking-2507 & Raw Model & 18923 & 81.87 & 21510 & 79.79 & 20789 & 77.71 & 8091 & 65.47 & 17328 & 76.21\\
Qwen3-4B-Thinking-2507 & DEER & 18620 & 80.74 & 20651 & 77.41 & 19219 & 78.52 & 7914 & 66.72 & 16601 & 75.85\\
Qwen3-4B-Thinking-2507 & RCPD & 14384 & 79.37 & 16049 & 78.33 & 16916 & 76.04 & 7265 & 66.04 & 13653 & 74.95\\
Qwen3-4B-Thinking-2507 & DRE & 13092 & 81.87 & 14882 & 80.54 & 14431 & 79.37 & 5689 & 65.67 & 12024 & 76.86\\
\midrule
Qwen3-8B & Raw Model & 13988 & 75.42 & 17336 & 68.75 & 15862 & 66.67 & 8683 & 59.34 & 13967 & 67.55\\
Qwen3-8B & DEER & 12219 & 72.08 & 14695 & 64.17 & 14324 & 59.26 & 8138 & 60.98 & 12344 & 64.12\\
Qwen3-8B & RCPD & 12952 & 74.79 & 15332 & 64.79 & 14857 & 62.50 & 7330 & 59.85 & 12618 & 65.48\\
Qwen3-8B & DRE & 10163 & 75.83 & 13448 & 66.67 & 11381 & 66.87 & 6658 & 60.98 & 10413 & 67.59\\
\midrule
Qwen3-30B-A3B-Thinking-2507 & Raw Model & 15299 & 88.12 & 18095 & 85.62 & 16985 & 88.96 & 6922 & 76.45 & 14325 & 84.79\\
Qwen3-30B-A3B-Thinking-2507 & DEER & 14405 & 87.78 & 18090 & 84.08 & 17707 & 86.29 & 6528 & 75.47 & 14183 & 83.40\\
Qwen3-30B-A3B-Thinking-2507 & RCPD & 13831 & 87.08 & 15231 & 83.54 & 15741 & 85.83 & 6021 & 74.94 & 12706 & 82.85\\
Qwen3-30B-A3B-Thinking-2507 & DRE & 10407 & 88.54 & 13040 & 83.54 & 12153 & 86.46 & 5318 & 75.51 & 10230 & 83.51\\
\bottomrule
\end{tabular}
}
\caption{Supplementary comparison with Raw Model, training-free early-exit methods, and DRE on the shared evaluation set. Average is computed over AIME24, AIME25, AIME26, and GPQA-D.}
\label{tab:deer_dre_appendix}
\end{table*}

\FloatBarrier

\nolinenumbers

\section{GClip as a Straight-Through Clipped-Ratio Operator}
\label{app:gclip_theorem}

This appendix formalizes the straight-through property of GClip used in Section~\ref{sec:gclip}. Throughout, let \(0<\ell<u\) and define
\[
\operatorname{GClip}(x;\ell,u)\;\triangleq\;
\operatorname{sg}\!\bigl(\mathrm{clip}(x,\ell,u)\bigr)\cdot\frac{x}{\operatorname{sg}(x)},
\]
for any policy ratio \(x>0\).
Here \(\operatorname{sg}\) denotes stop-gradient: it leaves the forward value unchanged but blocks gradients through its argument.

\begin{theorem}[GClip identity and straight-through derivative]
\label{thm:first_order_consistency}
Let \(s(x)=\operatorname{GClip}(x;\ell,u)\). In the forward pass,
\[
s(x)=\mathrm{clip}(x,\ell,u).
\]
Under automatic differentiation with stop-gradient, the scalar derivative with respect to \(x\) is
\[
D^{\mathrm{sg}}s(x)
=\frac{\mathrm{clip}(x,\ell,u)}{x}
=\mathrm{clip}\!\left(1,\frac{\ell}{x},\frac{u}{x}\right),
\]
or equivalently,
\[
D^{\mathrm{sg}}s(x)=
\begin{cases}
\ell/x, & 0<x<\ell,\\
1, & \ell\le x\le u,\\
u/x, & x>u.
\end{cases}
\]

Consequently, let \(x=x_\theta\) be a policy ratio that depends on the model parameters, and let \(L(\theta)=-A\,s(x_\theta)\) denote the clipped branch of a token-level loss for a positively preferred token, where \(A>0\) is treated as constant during differentiation. If \(x_\theta>u\), then
\[
\nabla_\theta^{\mathrm{sg}} L(\theta)
=-A\,\frac{u}{x_\theta}\,\nabla_\theta x_\theta,
\]
which is non-zero whenever \(\nabla_\theta x_\theta\neq 0\). By contrast, the standard clipped branch
\[
L_{\mathrm{clip}}(\theta)=-A\,\mathrm{clip}(x_\theta,\ell,u)
\]
has zero automatic-differentiation gradient for \(x_\theta>u\). Thus GClip leaves the clipped forward scalar value unchanged while replacing the ordinary clipped branch with a nonzero straight-through gradient in this selected saturation case.

Moreover, for every interior point \(x_0\in(\ell,u)\),
\[
s(x_0)=x_0
\quad\text{and}\quad
D^{\mathrm{sg}}s(x_0)=1.
\]
Therefore, inside the clipping interval, this token-level branch has the same first-order direction as the unclipped branch.
\end{theorem}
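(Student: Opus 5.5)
The plan is to formalize stop-gradient as a rule of automatic differentiation, then read off each claim in turn. First I fix the semantics of \(\operatorname{sg}\): in the forward pass \(\operatorname{sg}(v)=v\), and in the backward pass \(\operatorname{sg}(v)\) is treated as a constant whose value is frozen at the current point. For a fixed evaluation point \(x>0\), let \(c=\mathrm{clip}(x,\ell,u)\) and \(d=x\) be the frozen values. Then \(s\) is differentiated as the function \(\xi\mapsto c\,\xi/d\), evaluated at \(\xi=x\).

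The forward identity follows by substitution: \(s(x)=c\cdot x/x=\mathrm{clip}(x,\ell,u)\), which is well defined because \(x>0\). The surrogate derivative is the ordinary derivative of the linear map \(\xi\mapsto (c/d)\xi\), namely \(D^{\mathrm{sg}}s(x)=c/d=\mathrm{clip}(x,\ell,u)/x\). To rewrite this as \(\mathrm{clip}(1,\ell/x,u/x)\), I use that dividing by \(x>0\) preserves order, so \(\mathrm{clip}(x,\ell,u)/x=\mathrm{clip}(x/x,\ell/x,u/x)\). The three-case form then comes from splitting on \(x<\ell\), \(\ell\le x\le u\), and \(x>u\) and evaluating the clip in each case.

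For the parametric consequence, I apply the chain rule under the same convention. With \(A\) constant, \(\nabla^{\mathrm{sg}}_\theta L=-A\,D^{\mathrm{sg}}s(x_\theta)\,\nabla_\theta x_\theta\). When \(x_\theta>u\), this gives \(-A(u/x_\theta)\nabla_\theta x_\theta\). That expression is nonzero whenever \(\nabla_\theta x_\theta\neq0\), because \(A>0\) and \(u/x_\theta>0\).

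For the contrast with standard clipping, I note that on the open set \(\{x>u\}\) the function \(\mathrm{clip}(x,\ell,u)\) is the constant \(u\). Its derivative there is therefore \(0\), and so is the autodiff gradient of \(L_{\mathrm{clip}}\). Finally, for interior \(x_0\in(\ell,u)\), the clip equals \(x_0\), so \(s(x_0)=x_0\) and \(D^{\mathrm{sg}}s(x_0)=x_0/x_0=1\). This matches both the value and the derivative of the identity (unclipped) branch, which gives the first-order consistency claim.

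The only delicate step is making the stop-gradient semantics precise, since \(D^{\mathrm{sg}}\) is not the true derivative of the forward function \(s\). I would state explicitly that \(D^{\mathrm{sg}}\) denotes the derivative of the surrogate map with frozen factors, and that it coincides with the true derivative only on \((\ell,u)\). At the boundary points \(x=\ell,u\), the autodiff value is by convention the one obtained from the chosen branch of clip; the formula still gives \(1\) there because \(\mathrm{clip}(x)/x=1\) at those points.
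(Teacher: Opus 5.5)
Your proposal is correct and follows essentially the same route as the paper: freeze the two stop-gradient factors, read off the forward value \(\mathrm{clip}(x,\ell,u)\) and the derivative \(\mathrm{clip}(x,\ell,u)/x\), then apply the chain rule in the saturated case \(x_\theta>u\) and evaluate at interior points. One small refinement: at \(x=\ell\) and \(x=u\) no branch convention for clip is needed, because the clip factor sits inside \(\operatorname{sg}\) and is never differentiated (as the paper notes), so \(D^{\mathrm{sg}}s=1\) there follows directly from your frozen-factor formula.
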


\begin{proof}
\noindent\textit{Forward value.}
Fix any \(x>0\). Since \(\operatorname{sg}\) leaves the forward value unchanged, the first factor equals \(\mathrm{clip}(x,\ell,u)\) in the forward pass. The second factor satisfies
\[
\frac{x}{\operatorname{sg}(x)}=1
\]
in the forward pass because the denominator has the same numerical value as \(x\). Therefore
\[
s(x)=\operatorname{sg}\!\bigl(\mathrm{clip}(x,\ell,u)\bigr)\cdot\frac{x}{\operatorname{sg}(x)}
=\mathrm{clip}(x,\ell,u),
\]
which proves the first claim.

\smallskip
\noindent\textit{Surrogate derivative.}
For the derivative, the stop-gradient operator treats both \(\operatorname{sg}\!\bigl(\mathrm{clip}(x,\ell,u)\bigr)\) and \(\operatorname{sg}(x)\) as constants during backpropagation. Hence the derivative of the clipped factor itself is not used, including at the ordinary nondifferentiable points \(x=\ell\) and \(x=u\). We have
\[
\begin{aligned}
D^{\mathrm{sg}} s(x)
&=\operatorname{sg}\!\bigl(\mathrm{clip}(x,\ell,u)\bigr)\cdot
D^{\mathrm{sg}}\!\left(\frac{x}{\operatorname{sg}(x)}\right) \\
&=\mathrm{clip}(x,\ell,u)\cdot \frac{1}{x} \\
&=\frac{\mathrm{clip}(x,\ell,u)}{x}.
\end{aligned}
\]
Because \(x>0\), this is equivalently
\[
D^{\mathrm{sg}}s(x)=
\begin{cases}
\ell/x, & 0<x<\ell,\\
1, & \ell\le x\le u,\\
u/x, & x>u,
\end{cases}
\]
and also \(D^{\mathrm{sg}}s(x)=\mathrm{clip}(1,\ell/x,u/x)\).

\smallskip
\noindent\textit{Upper-side saturation.}
Let \(x=x_\theta\), let \(L(\theta)=-A\,s(x_\theta)\) with \(A>0\) treated as constant during differentiation, and suppose \(x_\theta>u\). The derivative identity gives
\[
D^{\mathrm{sg}} s(x_\theta)=\frac{u}{x_\theta}.
\]
By the chain rule,
\[
\nabla_\theta^{\mathrm{sg}} L(\theta)
=-A\,D^{\mathrm{sg}}s(x_\theta)\,\nabla_\theta x_\theta
=-A\,\frac{u}{x_\theta}\,\nabla_\theta x_\theta.
\]
Since \(A>0\), \(u>0\), and \(x_\theta>u\), this gradient is non-zero whenever \(\nabla_\theta x_\theta\neq 0\). For comparison, the standard clipped branch \(L_{\mathrm{clip}}(\theta)=-A\,\mathrm{clip}(x_\theta,\ell,u)\) is constant in \(x_\theta\) on the region \(x_\theta>u\), so its automatic-differentiation gradient is zero there. This proves that GClip preserves a straight-through signal after upper-side clip saturation.

\smallskip
\noindent\textit{Interior agreement.}
Now let \(x_0\in(\ell,u)\). Because \(x_0\) lies strictly inside the clipping interval, \(\mathrm{clip}(x_0,\ell,u)=x_0\). The forward identity from the first claim gives
\[
s(x_0)=\mathrm{clip}(x_0,\ell,u)=x_0.
\]
The derivative identity from the second claim gives
\[
D^{\mathrm{sg}}s(x_0)=\frac{\mathrm{clip}(x_0,\ell,u)}{x_0}=\frac{x_0}{x_0}=1.
\]
Thus GClip agrees exactly with the unclipped map to first order at every interior point of the clipping interval. The proof is complete.
\end{proof}

\begin{corollary}[Selected branches in the clipped surrogate]
Let
\[
\phi_A(x)=-\min(xA,\operatorname{GClip}(x;\ell,u)A)
\]
be the token-level loss branch used in Equation~\eqref{eq:token_loss}, with \(A\) treated as constant during differentiation and away from branch-tie points. If \(A>0\) and \(x>u\), the clipped branch is selected and
\[
\nabla_\theta^{\mathrm{sg}}\phi_A(x_\theta)
=
-A\frac{u}{x_\theta}\nabla_\theta x_\theta.
\]
If \(A<0\) and \(0<x<\ell\), the clipped branch is also selected and
\[
\nabla_\theta^{\mathrm{sg}}\phi_A(x_\theta)
=
-A\frac{\ell}{x_\theta}\nabla_\theta x_\theta.
\]
Therefore, Equation~\eqref{eq:token_loss} preserves nonzero straight-through signal in the upper-side positive-advantage case and in the lower-side negative-advantage case. The first case supports learning positively preferred edited suffixes; the second case continues to penalize lower-ranked tokens whose ratios have moved below the lower clip bound.
\end{corollary}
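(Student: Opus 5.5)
The plan is to reduce the corollary to the straight-through derivative formula of Theorem~\ref{thm:first_order_consistency}. The only new ingredient is deciding which branch of the $\min$ is active. By that theorem, the forward value of $\operatorname{GClip}(x;\ell,u)$ equals $\mathrm{clip}(x,\ell,u)$. So the choice of branch in $\phi_A(x)=-\min(xA,\operatorname{GClip}(x;\ell,u)A)$ is determined by comparing the ordinary scalars $xA$ and $\mathrm{clip}(x,\ell,u)A$. Because we stay away from tie points, the selected branch is locally constant. Automatic differentiation through $\min$ therefore routes the gradient entirely through that branch.

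\textbf{Case $A>0$, $x>u$.} Here $\mathrm{clip}(x,\ell,u)=u<x$, so $uA<xA$ and the minimum picks the GClip branch. Then $\phi_A(x_\theta)=-A\,s(x_\theta)$ locally, with $s$ as in Theorem~\ref{thm:first_order_consistency}. The upper-side saturation part of that theorem (or the chain rule together with $D^{\mathrm{sg}}s(x)=u/x$ for $x>u$) gives
\[
\nabla^{\mathrm{sg}}_\theta\phi_A(x_\theta)=-A\frac{u}{x_\theta}\nabla_\theta x_\theta .
\]

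\textbf{Case $A<0$, $0<x<\ell$.} Here $\mathrm{clip}(x,\ell,u)=\ell>x$. Multiplying by the negative $A$ reverses the inequality, so $\ell A<xA$ and again the GClip branch is selected. The derivative formula for $0<x<\ell$ gives $D^{\mathrm{sg}}s(x)=\ell/x$. The chain rule, with $A$ held constant, then yields $-A(\ell/x_\theta)\nabla_\theta x_\theta$.

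In both cases the prefactor $-A\,c/x_\theta$, with $c\in\{u,\ell\}$, is nonzero. The gradient is therefore nonzero whenever $\nabla_\theta x_\theta\neq 0$, which gives the final "therefore" claim; the interpretive sentences follow immediately from this.

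The only delicate step is the branch selection under stop-gradient. We must check that the $\min$ compares forward values, and that the $\mathrm{sg}$ terms do not change which branch is active. Both hold because $\mathrm{sg}$ is the identity in the forward pass. Excluding tie points avoids the subgradient ambiguity of $\min$.
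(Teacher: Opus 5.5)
Your proposal is correct and follows the paper's route. You determine the active branch of the $\min$ from forward values, using that $\operatorname{GClip}$ equals $\mathrm{clip}$ in the forward pass, and then apply the straight-through derivative $D^{\mathrm{sg}}\operatorname{GClip}(x;\ell,u)=\mathrm{clip}(x,\ell,u)/x$ from Theorem~\ref{thm:first_order_consistency}; you simply spell out the branch comparisons ($uA<xA$ when $A>0,\ x>u$, and $\ell A<xA$ when $A<0,\ x<\ell$) that the paper's one-line proof leaves implicit.
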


\begin{proof}
This follows directly from the branch selected by the minimum and the derivative identity \(D^{\mathrm{sg}}\operatorname{GClip}(x;\ell,u)=\mathrm{clip}(x,\ell,u)/x\). Thus GClip leaves the clipped forward scalar value unchanged while replacing the ordinary clipped branch with a nonzero straight-through gradient in the saturation cases selected by the surrogate.
\end{proof}

\section{Edited-Suffix Distribution Mismatch}
\label{app:gclip_mismatch}

The \emph{Standard Clip} ablation in Appendix~\ref{app:ablations} is motivated by the local distribution mismatch introduced when an accepted training-time edit replaces the original continuation after the accepted editable boundary \(\hat{k}\). The edited suffix no longer follows the sampled trajectory under \(\pi_{\theta_{\mathrm{old}}}\), so its early tokens can receive low old-policy probability near the boundary. Figure~\ref{fig:forced_cue_case} shows a representative accepted pair over the first 80 suffix positions; the shaded cue region starts at \(\hat{k}\) and includes the inserted final-answer cue. The original suffix remains on the original sampled trajectory, whereas the edited suffix begins with the verified final-answer string that closes the thinking process, followed by the regenerated content process. This mismatch can make positively preferred edited tokens more likely to enter upper-side clip saturation once their current-policy probability increases, which is the case where GClip preserves straight-through learning signal.

This mismatch also clarifies the status of the objective. The edited suffix is deliberately introduced as a verified synthetic alternative to the original continuation. Consequently, the edited-trajectory update should not be read as satisfying the usual on-policy sampling assumptions behind PPO-style clipping. Instead, DRE uses the clipped-ratio form as a bounded forward surrogate for a preference constructed from verifier-accepted edits.

\begin{figure}[t]
\centering
\includegraphics[width=0.95\linewidth]{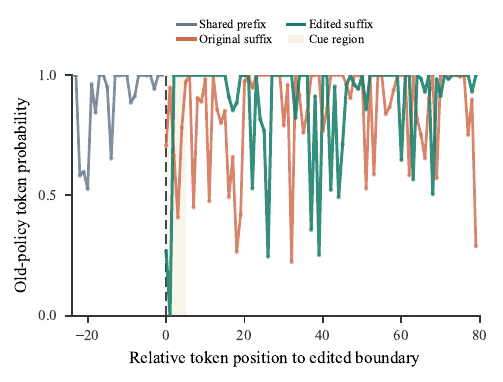}
\caption{Representative old-policy probability mismatch near an accepted editable boundary. The shaded cue region starts at the accepted boundary and includes the inserted final-answer cue. After the shared thinking prefix, the edited suffix often receives low old-policy probability near the boundary and over early edited-suffix tokens, while the original suffix remains on the original sampled trajectory.}
\label{fig:forced_cue_case}
\end{figure}

\section{Representative Rollouts Across Training}
\label{app:qualitative_rollouts}

Figure~\ref{fig:qualitative_rollouts} shows a single prompt across early, middle, and late training steps from the same GRPO training run discussed in Section~\ref{sec:pathology}. The example illustrates the contrast between thinking before and after answer emergence in Figure~\ref{fig:confound}. Thinking before answer emergence can be useful: the middle training step fixes the early training step's failure to convert ``3 hours and 24 minutes'' into minutes before boxing the answer. Thinking after answer emergence can become overthinking: the late training step already boxes the correct answer, then enters an additional verification block and boxes the same answer again. To keep the appendix compact, we show only the final portion of each response after the shared algebraic setup.

\definecolor{rolloutborder}{RGB}{108,128,156}
\definecolor{rolloutouter}{RGB}{240,244,250}
\definecolor{rolloutheader}{RGB}{219,228,240}
\definecolor{rolloutpanelheader}{RGB}{205,218,236}
\definecolor{rolloutpanelbg}{RGB}{251,252,254}
\definecolor{rollouthighlightbg}{RGB}{255,230,230}
\definecolor{rollouthighlightborder}{RGB}{196,48,43}
\definecolor{rollouthighlighttext}{RGB}{156,24,24}
\newlength{\rolloutpanelheight}
\setlength{\rolloutpanelheight}{28.5em}

\newcommand{\rollouthl}[1]{%
\begingroup
\setlength{\fboxsep}{1.2pt}%
\setlength{\fboxrule}{0.8pt}%
\mathchoice
  {\fcolorbox{rollouthighlightborder}{rollouthighlightbg}{$\displaystyle \color{rollouthighlighttext}\mathbf{#1}$}}
  {\fcolorbox{rollouthighlightborder}{rollouthighlightbg}{$\textstyle \color{rollouthighlighttext}\mathbf{#1}$}}
  {\fcolorbox{rollouthighlightborder}{rollouthighlightbg}{$\scriptstyle \color{rollouthighlighttext}\mathbf{#1}$}}
  {\fcolorbox{rollouthighlightborder}{rollouthighlightbg}{$\scriptscriptstyle \color{rollouthighlighttext}\mathbf{#1}$}}
\endgroup}

\newcommand{\rolloutcard}[2]{%
\fcolorbox{rolloutborder}{rolloutpanelbg}{%
\parbox[t][\rolloutpanelheight][t]{0.295\textwidth}{%
\colorbox{rolloutpanelheader}{\parbox{\dimexpr\linewidth-2\fboxsep\relax}{\centering\small\textbf{#1}}}\par
\vspace{0.6em}
\raggedright\footnotesize
\setlength{\abovedisplayskip}{0.35em}
\setlength{\belowdisplayskip}{0.35em}
\setlength{\abovedisplayshortskip}{0.25em}
\setlength{\belowdisplayshortskip}{0.25em}
#2%
}}}

\begin{figure*}[t]
\centering
\fcolorbox{rolloutborder}{rolloutouter}{%
\parbox{0.97\textwidth}{%
\colorbox{rolloutheader}{\parbox{\dimexpr\linewidth-2\fboxsep\relax}{\small\textbf{Prompt.} Every morning Aya goes for a 9-kilometer-long walk and stops at a coffee shop afterwards. When she walks at a constant speed of \(s\) kilometers per hour, the walk takes her 4 hours, including \(t\) minutes spent in the coffee shop. When she walks \(s+2\) kilometers per hour, the walk takes her 2 hours and 24 minutes, including \(t\) minutes spent in the coffee shop. Suppose Aya walks at \(s+\nicefrac{1}{2}\) kilometers per hour. Find the number of minutes the walk takes her, including the \(t\) minutes spent in the coffee shop.}}\par\medskip

\noindent{\small\textit{The three excerpts are taken directly from stored model outputs; \texttt{[earlier steps omitted]} marks omitted prefix text, and line breaks are adjusted for layout.}}\par\medskip

\centering
\begin{tabular}{@{}m{0.31\textwidth}@{\hspace{0.015\textwidth}}m{0.31\textwidth}@{\hspace{0.015\textwidth}}m{0.31\textwidth}@{}}
\rolloutcard
    {Early Training Step}
{
\textit{[earlier steps omitted]}\\[0.25em]
Using the first equation:
\[
\frac{9}{2.5} + \frac{t}{60} = 4
\]
\[
3.6 + \frac{t}{60} = 4
\]
\[
\frac{t}{60} = 0.4
\]
\[
t = 24 \text{ minutes}
\]
\textbf{Step 5: Calculate the total time when Aya walks at \( s + \frac{1}{2} = 3 \) km/h.}
\[
\begin{aligned}
\text{Total time} &= \frac{9}{3} + \frac{24}{60} \\
&= 3 + 0.4 \\
&= 3.4 \text{ hours}
\end{aligned}
\]
Convert 0.4 hours to minutes:
\[
0.4 \times 60 = 24 \text{ minutes}
\]
So, the total time is 3 hours and 24 minutes.

\textbf{Final Answer:}
\[
\boxed{344}
\]
}
&
\rolloutcard
    {Middle Training Step}
{
\textit{[earlier steps omitted]}\\[0.25em]
Given \( s = 2.5 \) km/h,
\[
s + \frac{1}{2} = 2.5 + 0.5 = 3 \text{ km/h}
\]
First, find the walking time at 3 km/h for 9 km:
\[
T_w = \frac{9}{3} = 3 \text{ hours}
\]
Then, the total time including coffee shop time:
\[
T = T_w + t = 3 + 0.4 = 3.4 \text{ hours}
\]
Convert 3.4 hours to hours and minutes:
\[
0.4 \text{ hours} = 0.4 \times 60 = 24 \text{ minutes}
\]
So,
\[
T = 3 \text{ hours and } 24 \text{ minutes}
\]
But the problem asks for the number of minutes. So, convert 3 hours and 24 minutes to minutes:
\[
\begin{aligned}
3 \times 60 + 24 \\
= 180 + 24 \\
= \rollouthl{204} \text{ minutes}
\end{aligned}
\]

\textbf{Final Answer:}
\[
\boxed{\rollouthl{204}}
\]
}
&
\rolloutcard
    {Late Training Step}
{
\textit{[earlier steps omitted]}\\[0.25em]
Thus, the final answer is:
\[
\boxed{\rollouthl{204}}
\]

Wait, let's ensure all steps are accurate.

\textbf{Verification}

Given:
1. \( \frac{9}{s} + \frac{t}{60} = 4 \) (1)
2. \( \frac{9}{s+2} + \frac{t}{60} = 2.4 \) (2)

Found:
\( s = 2.5 \) km/h, \( t = 24 \) minutes

For third scenario:
\begin{itemize}\itemsep0.1em\topsep0.15em\parsep0pt\partopsep0pt\leftmargin1.1em
\item Speed: \( s + \frac{1}{2} = 2.5 + 0.5 = 3 \) km/h
\item Time walking: \( \frac{9}{3} = 3 \) hours
\item Time in coffee shop: \( t = 24 \) minutes \( = 0.4 \) hours
\item Total time: \( 3 + 0.4 = 3.4 \) hours
\item Convert to minutes: \( 3.4 \times 60 = \rollouthl{204} \) minutes
\end{itemize}

Everything seems consistent. Therefore, the answer is:
\[
\boxed{\rollouthl{204}}
\]
}
\end{tabular}
}}\par

\caption{Single-prompt rollout progression across training. Each panel is an abridged excerpt from the stored model output at that training step; omitted text is marked explicitly, and the remaining wording is unchanged apart from markdown-to-\LaTeX{} formatting. The early training step reaches the correct verbal conclusion but boxes \(\boxed{344}\), showing that thinking before the final answer has not yet completed the required format conversion. The middle training step completes the needed conversion before boxing \(\boxed{204}\). The late training step boxes \(\boxed{204}\) and then continues into an additional verification block before boxing the same answer again. This example complements Figure~\ref{fig:confound}: thinking before \(k^\star\) can improve correctness, while thinking after \(k^\star\) can become overthinking attached to successful trajectories.}
\label{fig:qualitative_rollouts}
\end{figure*}

\section{Representative Accepted Edit}
\label{app:accepted_edit_case}

Figure~\ref{fig:accepted_edit_case} shows an accepted edit from the rollout-editing data used for the method-facing analyses. The example is chosen to make the edit operation visible: the two trajectories share a long thinking prefix, diverge after the accepted editable boundary \(\hat{k}\), and the edited branch closes the thinking process with the verified answer before regenerating the content process. The left branch is the lower-scored original continuation, and the right branch is the preferred accepted edit.

\definecolor{drecommonborder}{RGB}{91,111,135}
\definecolor{drecommonbg}{RGB}{245,248,252}
\definecolor{dreorigborder}{RGB}{180,74,68}
\definecolor{dreorigbg}{RGB}{255,246,245}
\definecolor{dreeditborder}{RGB}{58,136,103}
\definecolor{dreeditbg}{RGB}{244,251,247}
\definecolor{dretitlebg}{RGB}{231,237,246}

\begin{figure*}[t]
\centering
\begingroup
\setlength{\fboxsep}{6pt}
\setlength{\fboxrule}{0.8pt}
\fcolorbox{drecommonborder}{drecommonbg}{%
\begin{minipage}{0.94\textwidth}
\footnotesize
\textbf{Shared thinking prefix through \(\hat{k}\).}
\textit{[earlier thinking omitted]} The step-by-step sum with the verified \(a_n\) values gives 450, so I think that's the answer.

\medskip
Just to make 100\% sure, let's compute the sum again in a different order. Starting from \(n=11\) down to \(n=1\), the alternating sum gives \(450\) again after the \(n=3\), \(n=2\), and \(n=1\) terms.

\medskip
Yes! Same result. So regardless of the order, we get 450.
\end{minipage}}

\vspace{0.35em}
\noindent
\begin{minipage}{0.47\textwidth}
\centering
{\color{dreorigborder}\shortstack{\rule{1.6pt}{1.0em}\\[-0.4em]\scalebox{1.8}{\(\blacktriangledown\)}}}
\end{minipage}\hspace{0.02\textwidth}%
\begin{minipage}{0.47\textwidth}
\centering
{\small accepted editable boundary \(\hat{k}\)}\par\vspace{-0.1em}
{\color{dreeditborder}\shortstack{\rule{1.6pt}{1.0em}\\[-0.4em]\scalebox{1.8}{\(\blacktriangledown\)}}}
\end{minipage}
\vspace{0.15em}

\begin{tabular}{@{}p{0.47\textwidth}@{\hspace{0.02\textwidth}}p{0.47\textwidth}@{}}
\fcolorbox{dreorigborder}{dreorigbg}{%
\begin{minipage}[t]{0.43\textwidth}
\footnotesize
\colorbox{dretitlebg}{\parbox{\dimexpr\linewidth-2\fboxsep\relax}{\centering\textbf{Original continuation after \(\hat{k}\)}}}\par
\vspace{0.4em}
I think that's solid. The key was not getting confused by the initial simplification, computing the first few \(a_n\) correctly, finding the pattern for the minima at the median, deriving the formulas, verifying with multiple methods, then carefully summing with the correct signs.

\medskip
The problem says ``for a positive integer \(n\),'' so \(n\) starts at 1, which we did, and the sum is from \(n=1\) to 11, which we covered.

\medskip
I don't see any mistakes in the \(a_n\) values or the summation now, so I think the answer is 450.

\medskip
\texttt{</think>}

\medskip
To solve the problem, we begin by simplifying the function:
\[
f_n(x)=\left(\sum_{i=1}^n |x-i|\right)^2-\sum_{i=1}^n (x-i)^2.
\]
\end{minipage}}
&
\fcolorbox{dreeditborder}{dreeditbg}{%
\begin{minipage}[t]{0.43\textwidth}
\footnotesize
\colorbox{dretitlebg}{\parbox{\dimexpr\linewidth-2\fboxsep\relax}{\centering\textbf{Accepted edit}}}\par
\vspace{0.4em}
\begin{center}
\textbf{Final Answer}\\[-0.15em]
\[
\boxed{450}
\]
\texttt{</think>}
\end{center}

\medskip
To solve the problem, we begin by analyzing the function:
\[
f_n(x)=\left(\sum_{i=1}^n |x-i|\right)^2-\sum_{i=1}^n (x-i)^2.
\]
This expression simplifies significantly because \(|x-i|^2=(x-i)^2\). Expanding the square gives
\[
f_n(x)=2\sum_{1\leq i<j\leq n}|x-i||x-j|.
\]
\end{minipage}}
\end{tabular}
\endgroup
\caption{A concrete accepted edit produced by dynamic rollout editing. The shared thinking prefix is preserved through the accepted editable boundary \(\hat{k}\). The original branch continues with additional post-answer checking, while the accepted edit closes the thinking process with the verified answer and regenerates the content process. During training, the accepted edit is preferred over the original continuation, and prefix masking prevents the lower-scored original branch from directly penalizing the shared prefix.}
\label{fig:accepted_edit_case}
\end{figure*}

\section{Case Study: DRE Reduces Post-Solution Continuation}
\label{app:raw_dre_aime_case}

Figure~\ref{fig:raw_dre_aime_case} presents a paired AIME24 example comparing the Raw Model and the DRE-trained model on the same prompt. Once the verified answer emerges, the Raw Model continues to reflect on and re-check the exposed value, producing a long post-solution continuation. In contrast, the DRE-trained model closes the thinking process shortly after deriving the same value.

\definecolor{rawdreborder}{RGB}{98,112,132}
\definecolor{rawdreouter}{RGB}{244,247,251}
\definecolor{rawdreheader}{RGB}{225,234,245}
\definecolor{rawdreorigbg}{RGB}{255,247,246}
\definecolor{rawdreeditbg}{RGB}{245,252,248}
\definecolor{rawdreomitbg}{RGB}{238,241,245}
\definecolor{rawdrehighlight}{RGB}{34,128,82}
\newlength{\rawdrecaseheight}
\setlength{\rawdrecaseheight}{27.2em}

\newcommand{\rawdrecasecard}[3]{%
\fcolorbox{rawdreborder}{#2}{%
\parbox[t][\rawdrecaseheight][t]{0.445\textwidth}{%
\colorbox{rawdreheader}{\parbox{\dimexpr\linewidth-2\fboxsep\relax}{\centering\small\textbf{#1}}}\par
\vspace{0.55em}
\raggedright\footnotesize
\setlength{\abovedisplayskip}{0.32em}
\setlength{\belowdisplayskip}{0.32em}
\setlength{\abovedisplayshortskip}{0.22em}
\setlength{\belowdisplayshortskip}{0.22em}
#3%
}}}
\newcommand{\rawdrecaseomit}[1]{%
\colorbox{rawdreomitbg}{\parbox{\dimexpr\linewidth-2\fboxsep\relax}{\scriptsize #1}}}

\begin{figure*}[t]
\centering
\fcolorbox{rawdreborder}{rawdreouter}{%
\parbox{0.97\textwidth}{%
\colorbox{rawdreheader}{\parbox{\dimexpr\linewidth-2\fboxsep\relax}{\small\textbf{Prompt.} Jen picks \(4\) distinct numbers from \(\{1,\ldots,10\}\), and \(4\) numbers are randomly chosen from the same set. She wins a prize if at least two of her numbers are among the chosen numbers, and wins the grand prize if all four match. If the conditional probability of the grand prize given any prize is \(m/n\), find \(m+n\).}}\par\medskip

\noindent{\small\textit{The panels summarize the same sample slot in the stored Raw Model and DRE outputs. The long repeated checks are compressed, and notation and line breaks are normalized for layout.}}\par\medskip

\begingroup
\centering
\scriptsize
\setlength{\tabcolsep}{0.55em}
\begin{tabular}{@{}lccc@{}}
\hline
Model & Verified answer & Thinking length & Thinking after answer emergence \\
\hline
Raw Model & \(\boxed{\rollouthl{116}}\) & \(22{,}808\) chars & \(\approx 12{,}200\) chars \\
DRE-trained & \(\boxed{116}\) & \(7{,}238\) chars & \(32\) chars \\
\hline
\end{tabular}
\par
\endgroup
\medskip

\centering
\begin{tabular}{@{}m{0.47\textwidth}@{\hspace{0.02\textwidth}}m{0.47\textwidth}@{}}
\rawdrecasecard
    {Raw Model: Long Post-Answer Checking}
    {rawdreorigbg}
{
\textit{Answer emergence.}\\[-0.35em]
\[
\begin{aligned}
N_{\geq 2}
&=\binom{4}{2}\binom{6}{2}
  +\binom{4}{3}\binom{6}{1}
  +\binom{4}{4}\\
&=90+24+1=115,\\
\Pr(\mathrm{grand}\mid\mathrm{prize})&=\frac{1}{115},\\
m+n&=\rollouthl{116}.
\end{aligned}
\]
\emph{``\(m+n=\rollouthl{116}\)? Wait, but wait, wait a second, is this correct?''}

\medskip
\rawdrecaseomit{The Raw Model then continues for over twelve thousand characters. It reopens the interpretation of ``at least two,'' recomputes the match counts, checks small examples and lottery variants, and repeatedly returns to the same value.}

\medskip
\textit{Late continuation.}\\[-0.35em]
\[
\binom{4}{2}\binom{6}{2}=90,\quad
\binom{4}{3}\binom{6}{1}=24,\quad
\binom{4}{4}=1.
\]
\emph{``So the answer should be \(1+115=\rollouthl{116}\).''}
\[
\boxed{\rollouthl{116}}
\]
}
&
\rawdrecasecard
    {DRE-Trained Model: Immediate Closure}
    {rawdreeditbg}
{
\textit{Near the end of thinking.}\\[0.25em]
For exactly \(k\) matches, choose \(k\) numbers from Jen's ticket and \(4-k\) from the other six numbers.

\[
\binom{4}{2}\binom{6}{2}=90,\quad
\binom{4}{3}\binom{6}{1}=24,\quad
\binom{4}{4}=1.
\]
\[
N_{\geq 2}=90+24+1=115.
\]
Only \(1\) of these \(115\) cases is the grand prize, so
\[
\Pr(\mathrm{grand}\mid\mathrm{prize})=\frac{1}{115}.
\]
\[
m=1,\quad n=115,
\qquad
m+n=\textcolor{rawdrehighlight}{116}.
\]

\medskip
\begin{center}
\textbf{Final Answer}\\[0.10em]
\(\boxed{116}\)\\[0.05em]
\texttt{</think>}
\end{center}

\smallskip
\begin{center}
\scriptsize
\begin{minipage}{0.92\linewidth}
The number of prize-winning draws is \(90+24+1=115\), and only \(1\) of these draws is the grand prize. Hence
\[
P(\text{Grand Prize}\mid\text{Prize})=\frac{1}{115}.
\]
Thus, \(m=1\), \(n=115\), and \(m+n=116\).
\[
\boxed{116}
\]
\end{minipage}
\end{center}
}
\end{tabular}
}}\par

\caption{Paired Raw Model and DRE-trained outputs for an AIME24 prompt where both models reach the verified answer \(116\). The Raw Model exposes the final requested value and then continues with an extended post-answer checking phase, including repeated reinterpretation and consistency checks. The DRE-trained model performs the necessary counting calculation and closes the thinking process almost immediately after answer emergence. This qualitative case illustrates the intended effect of DRE: reducing overthinking after answer emergence without changing a correct final answer.}
\label{fig:raw_dre_aime_case}
\end{figure*}

\section{Semantic Dynamics Under DRE}
\label{app:semantic_dre_dynamics}

Figure~\ref{fig:semantic_original_vs_dre_dynamics} gives a representative semantic-space analysis of the same target behavior. Following the reasoning-dynamics analysis of LLM overthinking~\citep{wei2025evolutionthought}, we examine whether DRE shortens post-solution continuation after answer emergence without erasing thinking that may be needed before the answer-emergence boundary \(k^\star\). For each thinking prefix \(k\), we force the thinking process to stop at that prefix, sample content continuations, embed the sampled content processes, and average them into a prefix-level centroid \(e_k\). The left panel projects these centroids into a shared two-dimensional PCA space for visualization only. The right panel reports the high-dimensional semantic residual \(1-\cos(e_k,\bar e_{\geq k^\star})\), where \(\bar e_{\geq k^\star}\) is the centroid of the post-\(k^\star\) prefix embeddings for the corresponding trajectory.

The original model continues through a longer sequence of prefix states and moves within a broad post-\(k^\star\) semantic neighborhood. In contrast, the DRE-trained model reaches a compact region over a shorter prefix range. Because each residual curve uses its own post-\(k^\star\) centroid, the residual values should be read as within-trajectory concentration rather than distances to a shared semantic target. This qualitative analysis is consistent with the main evidence in Section~\ref{sec:experiments}: DRE reduces continuation after answer emergence while preserving pre-\(k^\star\) reasoning that may be needed to expose the verified answer.

\begin{figure*}[t]
\centering
\includegraphics[width=0.96\textwidth]{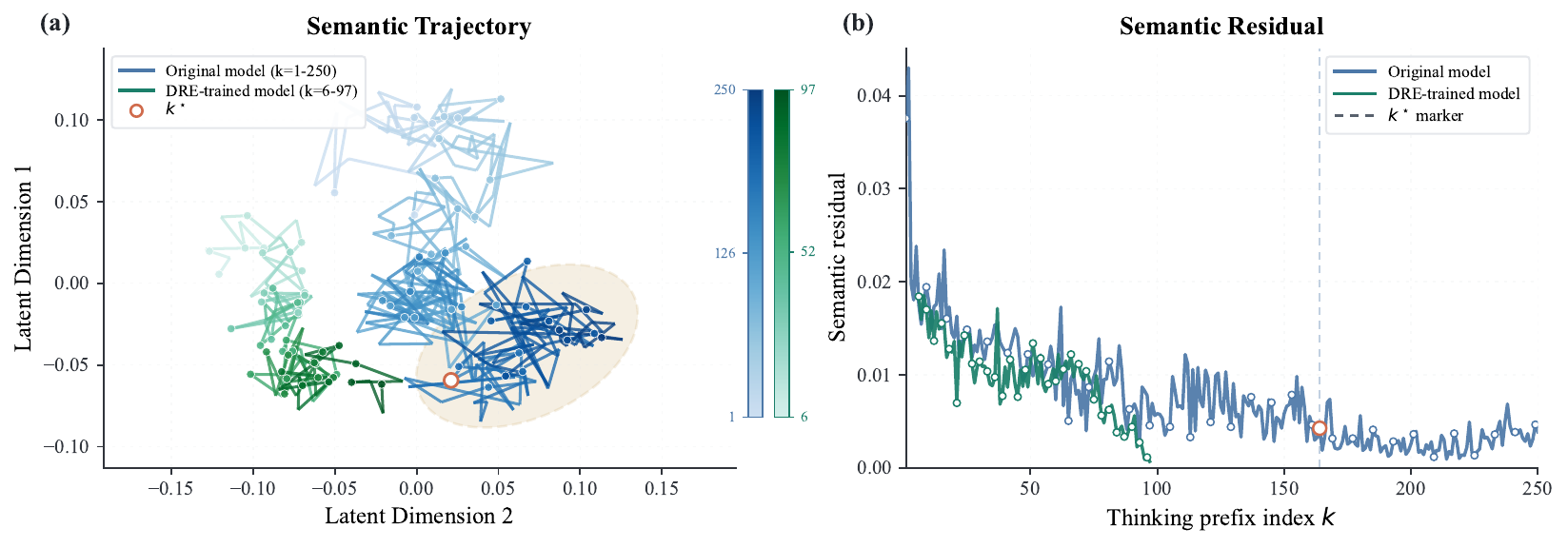}
\caption{Supplementary representative semantic-dynamics analysis for the original model and the DRE-trained model. \textbf{(a)} Prefix-level content centroids are projected with a shared PCA basis; color intensity encodes the thinking prefix index \(k\) separately for each trajectory. The red marker indicates the answer-emergence boundary \(k^\star\) for the original model, and the shaded region summarizes the original model's post-\(k^\star\) neighborhood. \textbf{(b)} Semantic residual is computed in the original embedding space as distance to each trajectory's own post-\(k^\star\) centroid, so it measures within-trajectory concentration rather than distance to a shared target. The DRE-trained trajectory covers fewer prefix states and reaches a low-residual region within its shorter prefix range, while the original model continues with post-\(k^\star\) semantic movement.}
\label{fig:semantic_original_vs_dre_dynamics}
\end{figure*}

\section{Declaration of AI Use}
\label{sec:ai_declaration}

\paragraph{Writing assistance.}
Following the ACL policy on AI writing assistance, we disclose that ChatGPT was used only for language polishing during manuscript preparation. Its use was limited to editing author-written text for grammar, clarity, and readability. It was not used to develop the method, design experiments, formulate scientific claims, interpret results, or produce experimental data. The authors reviewed all AI-assisted edits and remain responsible for the final manuscript.

\paragraph{Experimental context.}
The Qwen3-family models and DeepSeek-R1-style RL setup discussed in this work are part of the research subject and experimental context. These models and settings were not used as writing assistants for drafting the manuscript.

\end{document}